\documentclass[10pt,letterpaper]{article}

\usepackage[left=1.05in,right=1.05in,top=1.2in,bottom=0.75in,headheight=15pt,headsep=40.8pt,footskip=28.3pt]{geometry}

\usepackage[T1]{fontenc}
\usepackage{times}
\makeatletter
\renewcommand\normalsize{%
  \@setfontsize\normalsize{10}{11}%
  \abovedisplayskip 8pt plus 2pt minus 2pt
  \abovedisplayshortskip 0pt plus 2pt
  \belowdisplayshortskip 5pt plus 2pt minus 2pt
  \belowdisplayskip \abovedisplayskip
  \let\@listi\@listI}
\makeatother
\normalsize

\usepackage{amsmath,amsfonts,bm}

\def\eqref#1{equation~\ref{#1}}
\def\1{\bm{1}}

\DeclareMathAlphabet{\mathsfit}{\encodingdefault}{\sfdefault}{m}{sl}
\SetMathAlphabet{\mathsfit}{bold}{\encodingdefault}{\sfdefault}{bx}{n}

\newcommand{\Cov}{\mathrm{Cov}}
\usepackage{amsmath,amssymb,amsthm}
\usepackage{mathtools}

\usepackage{booktabs}
\usepackage{graphicx}

\usepackage{enumitem}
\usepackage{xcolor}
\usepackage{array,tabularx,colortbl}
\usepackage{microtype}
\usepackage{titlesec}
\usepackage[indent=0pt,skip=6pt plus 1pt minus 1pt]{parskip}
\usepackage[font=normalsize,labelfont=normalfont,labelsep=colon,skip=6pt]{caption}
\usepackage{fancyhdr}
\definecolor{paperblue}{HTML}{245580}
\definecolor{cotselink}{HTML}{1B2735}
\definecolor{cotselblue}{HTML}{0A4BD6}
\fancypagestyle{opening}{%
  \fancyhf{}
  \fancyhead[L]{\sffamily\bfseries\fontsize{15}{16}\selectfont\textcolor{cotselink}{CoT}\textcolor{cotselblue}{sel}}
  \fancyfoot[C]{\normalfont\fontsize{10}{11}\selectfont\thepage}
  }

\usepackage{natbib}
\setcitestyle{authoryear,square,aysep={,},yysep={,},notesep={, }}

\usepackage[colorlinks=true,linkcolor=paperblue,citecolor=paperblue,urlcolor=paperblue,
  pdftitle={Correctness Conditioning as Selection in Chain-of-Thought Faithfulness Evaluation},
  pdfauthor={Yueer Pan; Yuexi Pan},
  pdfsubject={Correctness-conditioned estimands for chain-of-thought faithfulness evaluation}
]{hyperref}
\usepackage{url}
\titleformat{\section}{\normalfont\fontsize{12}{13}\selectfont\scshape}{\thesection}{1em}{}
\titleformat{\subsection}{\normalfont\fontsize{10}{11}\selectfont\scshape}{\thesubsection}{1em}{}
\titleformat{\subsubsection}{\normalfont\normalsize\bfseries}{\thesubsubsection}{1em}{}
\titlespacing*{\section}{0pt}{16pt plus 2pt minus 2pt}{9pt}
\titlespacing*{\subsection}{0pt}{13pt plus 2pt minus 2pt}{8pt}
\titlespacing*{\subsubsection}{0pt}{10pt plus 2pt minus 2pt}{5pt}
\renewenvironment{abstract}{%
  \par\vspace{6pt}\noindent
  {\color{paperblue}\sffamily\bfseries\fontsize{10}{11}\selectfont ABSTRACT}\hspace{0.8em}%
  {\color{paperblue!30}\leaders\hrule height 0.3pt\hfill\kern0pt}\par\vspace{3pt}%
  \normalfont\fontsize{9}{10}\selectfont\setlength{\parindent}{0pt}%
}{\par}

\newtheorem{definition}{Definition}
\newtheorem{proposition}{Proposition}
\newtheorem{corollary}{Corollary}

\newcommand{\Fpop}{F_{\mathrm{pop}}}
\newcommand{\Fsel}{F_{\mathrm{sel}}}
\renewcommand{\Cov}{\mathrm{Cov}}

\title{Correctness Conditioning as Selection in\\Chain-of-Thought Faithfulness Evaluation}

\author{\textbf{Yueer Pan}\textsuperscript{1} \quad \textbf{Yuexi Pan}\textsuperscript{2}\\
\textnormal{\textsuperscript{1}Zhejiang University}\\
\textnormal{\textsuperscript{2}University of Science and Technology of China}\\
{\fontsize{10}{12}\selectfont\texttt{panjoel@zju.edu.cn \quad panyuexi@mail.ustc.edu.cn}}\\[4pt]
{\fontsize{10}{12}\selectfont\sffamily
\href{https://ccs-cotfe.github.io/CCS-CoTFE/}{\color{black}\raisebox{\dimexpr .5\fontcharht\font`H-.5\height\relax}{\includegraphics[width=1em,height=1em]{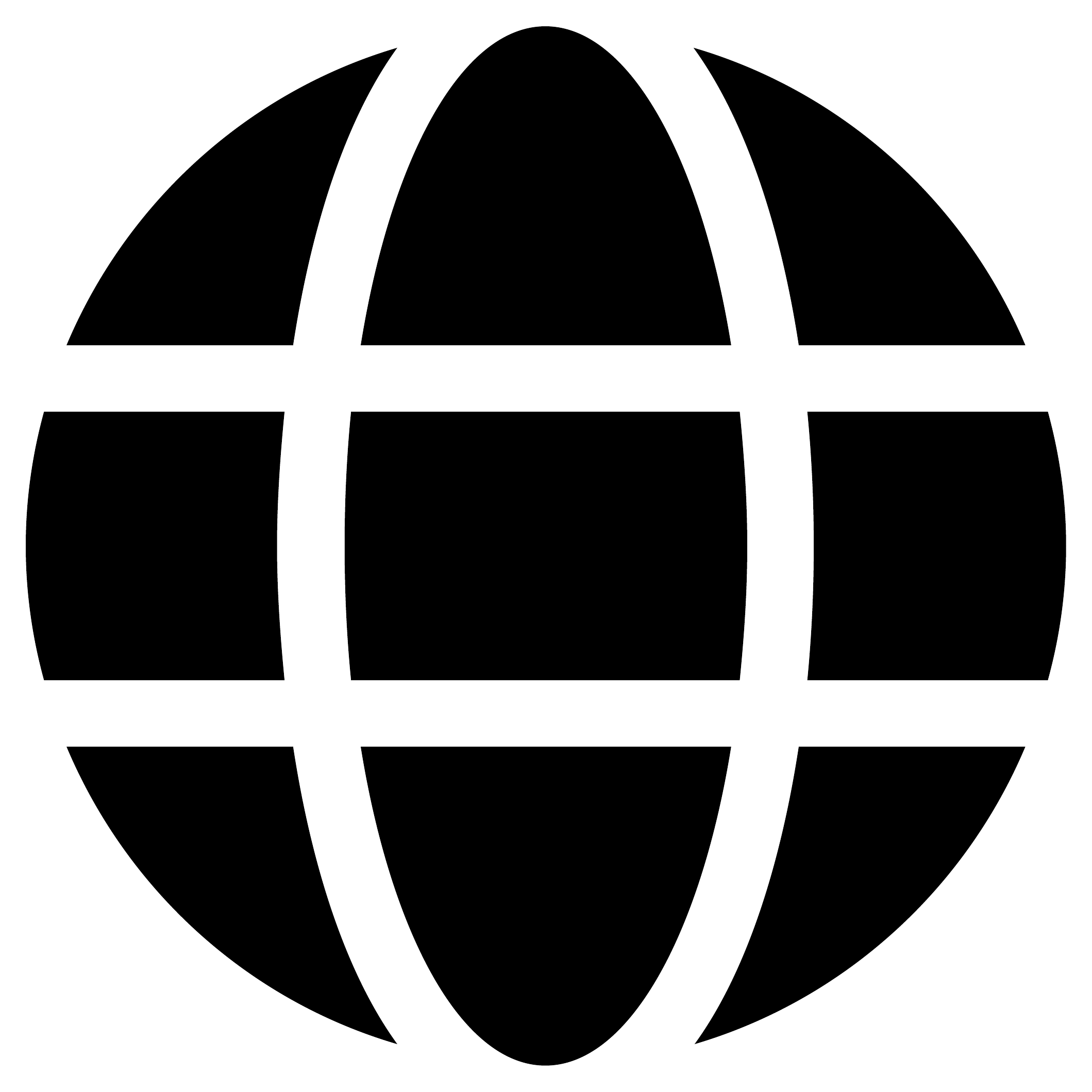}}\hspace{0.4em}\textbf{Webpage}}\hspace{0.9em}{\color{black!25}|}\hspace{0.9em}
\href{https://github.com/CCS-CoTFE/CCS-CoTFE}{\color{black}\raisebox{\dimexpr .5\fontcharht\font`H-.5\height\relax}{\includegraphics[width=1em,height=1em]{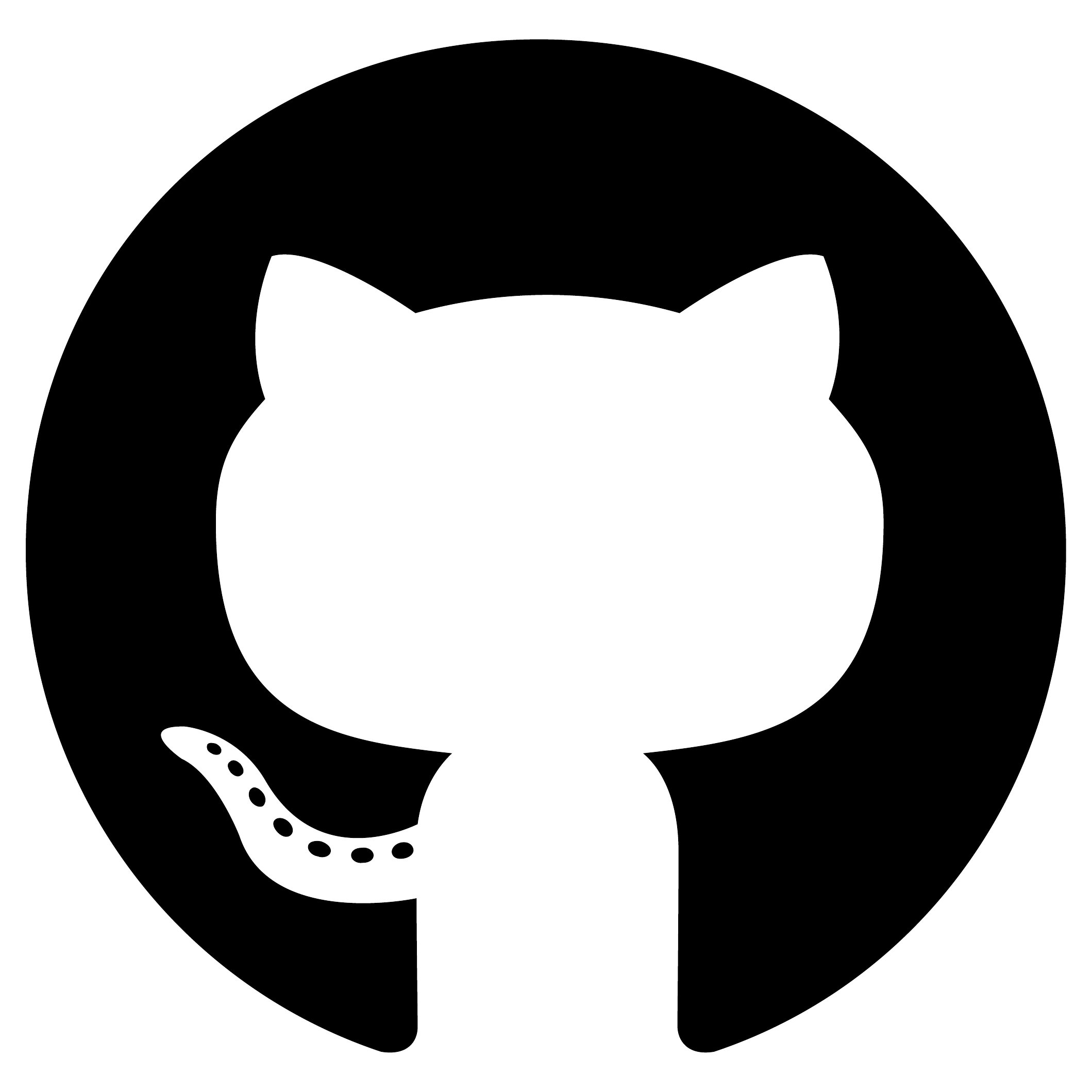}}\hspace{0.4em}\textbf{GitHub}}}}
\date{}

\makeatletter
\renewcommand{\maketitle}{%
  \begin{center}
    {\fontsize{17.28}{20}\selectfont\bfseries\@title\par}
    \vskip 1.2em
    {\fontsize{10}{12}\selectfont\@author\par}
  \end{center}
  \vskip 0.4em
}
\newenvironment{overviewfigure}{%
  \par\medskip\noindent\begin{minipage}{\textwidth}%
  \def\@captype{figure}\centering
}{\end{minipage}\par\medskip}
\makeatother

\begin{document}
\thispagestyle{opening}

% Keep the title, overview and single-paragraph abstract compact on page one.
\begingroup
\setlength{\parskip}{0pt}
\vspace*{-30pt}
\maketitle

% BEGIN sections/overview
% Conceptual illustration above independently plotted empirical results.
\begin{overviewfigure}
\includegraphics[width=\textwidth]{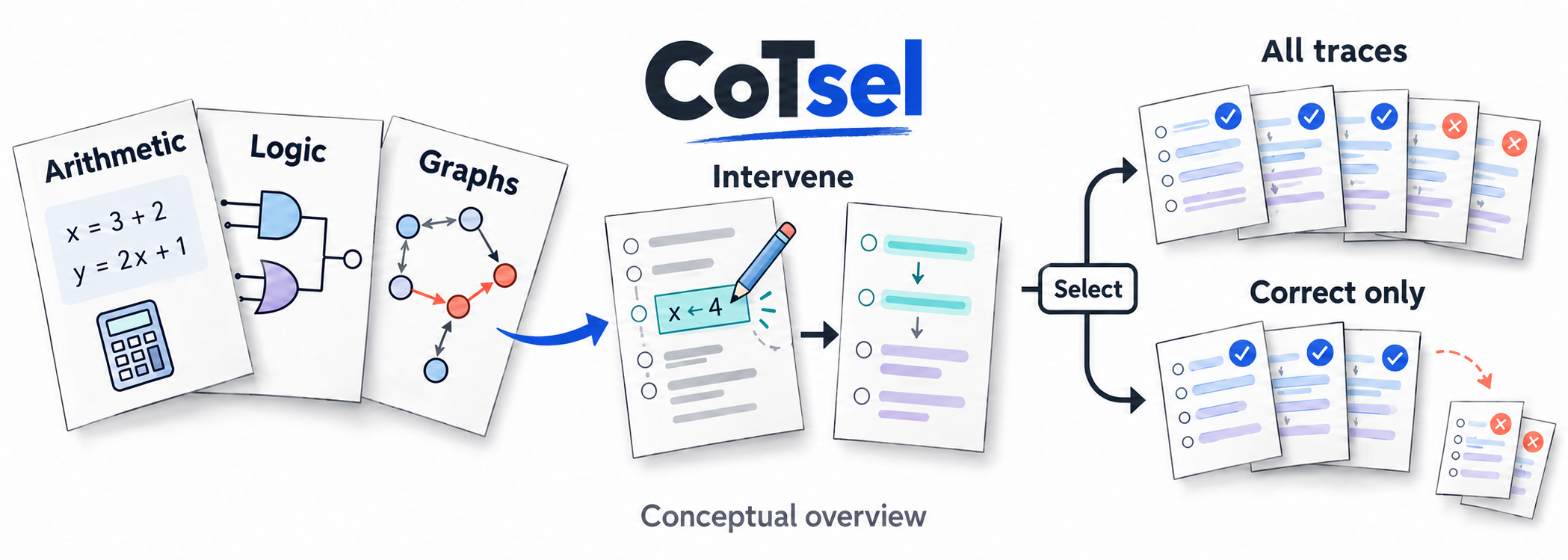}\\[4pt]
\includegraphics[width=\textwidth]{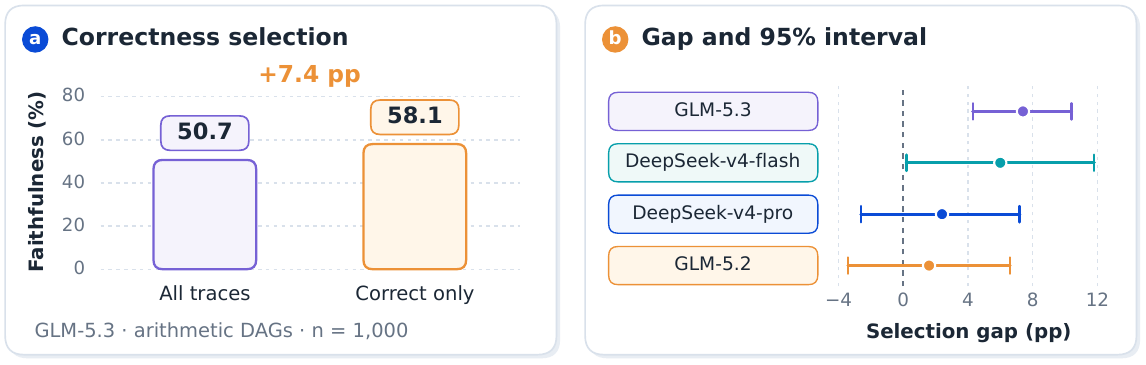}
\caption{\textbf{CoTsel overview.} Conceptual task intervention and correctness selection (top), with measured faithfulness and selection gaps on arithmetic DAGs (bottom).}
\label{fig:overview}
\end{overviewfigure}
% END sections/overview
% BEGIN sections/abstract
\begin{abstract}
Restricting chain-of-thought (CoT) faithfulness evaluation to correctly answered instances changes the population being measured. We study this selection step in a seven-variable structural causal model and derive the exact observed gap $\Delta = \mathrm{Cov}(A, C)/P(C{=}1)$. The model distinguishes difficulty confounding from shortcut-induced collider bias and states the conditional-independence assumption needed to express the gap through latent response propensities. A protocol review of 18 papers distinguishes answer-correctness restrictions from agreement-based selection, benchmark construction, and training-data filtering. We evaluate four sub-ceiling deployments on arithmetic tasks and observe positive selection gaps in all four. The primary GLM-5.3 run yields a $+7.4$ percentage-point gap ($n = 1000$, $p < 0.0001$), and pooled Cochran--Mantel--Haenszel analysis gives OR $= 1.70$ ($p < 10^{-6}$). An assumption-free covariance decomposition assigns 42.8\% of the GLM-5.3 gap to between-stratum variation in interventionable-operation count, leaving the majority unattributed. When counterfactual compliance is observable for every instance, evaluating all traces directly estimates population faithfulness without correctness filtering.
\end{abstract}
% END sections/abstract
\par
\endgroup
\clearpage
% BEGIN sections/introduction
\section{Introduction}
\label{sec:introduction}

Chain-of-thought (CoT) faithfulness evaluation asks whether a model's reasoning trace reflects the computation producing its final answer~\citep{al2023measuring,al2023language}. One approach intervenes on a reasoning step and measures the response. Restricting the comparison to correctly answered instances changes the population being evaluated and, with it, the quantity being estimated. Correct-response analyses occur in published protocols, including Putnam shortcut detection~\citep{arcuschin2025wild} and MMLU step-importance analysis~\citep{bogdan2025anchors}. These protocols measure different properties; our question is how correctness selection changes an intervention-based compliance estimate.

Figure~\ref{fig:overview} summarizes the evaluation protocol and the choice of reporting population. Checks and crosses indicate factual answer correctness; counterfactual compliance is measured separately. The task cards illustrate the protocol families, and the plots report measured results on arithmetic DAGs.

We formalize correctness conditioning in a seven-variable structural causal model (SCM, Figure~\ref{fig:scm}). The identity $\Delta = \Cov(A,C)/P(C{=}1)$ gives the observed selection gap. Under an explicit conditional-independence premise, it factors as $\Delta = \Cov(\eta, \pi) / P(C{=}1)$, relating the gap to correctness and compliance propensities. The framework links protocol and estimand, guiding our audit, deployment tests, and adjustment for an observed difficulty proxy.

The SCM separates two pathways. Difficulty affects both correctness and compliance ($D \to C$, $D \to A$), while conditioning on correctness opens the collider $R \to C \leftarrow S$ when latent shortcut competence is present. Our main experiments measure the association with operation count; the neutral-framing detector records no shortcut use. We derive a ranking-reversal condition and, under the perfect-faithful-reasoning special case, an accuracy-axis peak at $P^{*} = \sqrt{p_s}$. Separate exploratory probes elicit the required shortcut reliance (Appendix~\ref{sec:shortcut_elicitation}), but do not test the peak or reversal thresholds.

Our main contributions are threefold:
\begin{itemize}[leftmargin=1.5em,labelsep=0.6em,itemsep=7pt,topsep=5pt,parsep=0pt]
\item We develop a \textbf{causal framework for correctness conditioning} in CoT faithfulness evaluation. We derive an exact expression for the observed selection gap and specify the assumptions required to analyse difficulty confounding and shortcut-induced collider bias.

\item We compare \textbf{reporting populations in 18 papers}, separating answer-correctness restrictions from agreement-based selection, benchmark construction, and training-data filtering. The review identifies which protocol choices correspond to the selection operator in our analysis.

\item We evaluate \textbf{four model deployments on arithmetic tasks} and measure a \textbf{7.4 percentage-point selection gap} on the primary GLM-5.3 run. We decompose the gap by operation count and test its sensitivity to reweighting and the handling of unparseable responses.
\end{itemize}

We analyse the explicit-reliance probes independently of the main experiments; they use separate endpoints (Appendix~\ref{sec:shortcut_elicitation}). For evaluation, the framework gives a direct rule: use all traces whenever counterfactual compliance is observable for every instance (Section~\ref{sec:theory}).
% END sections/introduction
% BEGIN sections/related_work
\section{Related Work}
\label{sec:related}

\paragraph{Intervention-based CoT faithfulness.}
Chain-of-thought prompting~\citep{wei2022chain} provides an explicit reasoning trace for large language models. Trace evaluations use truncation, paraphrase, and mistake insertion~\citep{al2023measuring}. Injected cues can bias CoT while the reasoning remains plausible~\citep{al2023language}. An analysis of metric degeneracy examines how answer accuracy can confound unfaithfulness measures in one setting~\citep{al2024chain}. More recent work studies faithfulness through information flow~\citep{al2026faithfulness}, counterfactual simulation~\citep{al2026counterfactual}, step-level importance~\citep{bogdan2025anchors}, unlearning~\citep{tutek2025unlearning}, and benchmarking~\citep{al2025faithcot}.

Two 2026 studies identify additional sources of measurement error. One measures alignment between latent answer commitment and the explicit trace (61.9\% average alignment)~\citep{li2026performative}. This analysis concerns trace timing, whereas our estimand depends on which traces enter the aggregate. The other shows that classifier choice alone reverses rankings (regex 74.4\% vs.\ LLM judge 69.7\% on identical data)~\citep{young2026measuring}. This changes measurement on a fixed sample; correctness filtering changes the sample itself. Accurate step-level assessment and a fixed classifier therefore do not resolve the selection problem. Conversely, including every trace does not resolve classifier sensitivity. We derive the bias introduced by correctness conditioning in closed form.

Causal approaches also examine the reasoning process and its inputs. Controlled perturbations serve as instrumental signals for assessing step-level faithfulness within a single chain (intra-chain dependence)~\citep{sun2026facte}, while causal models of CoT steps address correctness and interpretability~\citep{fu2025unveiling}. Studies of vision-language models examine how text and image biases affect faithfulness articulation~\citep{balasubramanian2025closer}. Confidence-based trace filtering has been proposed as an alternative evaluation strategy~\citep{pathak2026filtered}. Our analysis addresses the evaluation \emph{sample}: conditioning it on answer correctness changes the aggregate faithfulness measure.

\paragraph{Selection bias in evaluation.}
Studies of LLM evaluation address position effects~\citep{shi2025judging} and metric disagreement. Collider bias has been identified in LLM training data curation~\citep{mcmilin2022collider}; correctness filtering introduces a related selection step at evaluation time. Other studies assess the causal validity of faithfulness metrics~\citep{zaman2025causal} and their accuracy against ground truth~\citep{bonafide2026}. We complement these analyses by characterizing selection in the sample on which the metric is computed.

Our analysis connects to sample-selection models~\citep{heckman1979} and, when the shortcut pathway is active, collider bias~\citep{elwert2014endogenous}. Correctness filtering also raises a recoverability question when compliance on excluded instances is unavailable~\citep{rubin1976inference,mohan2021graphical}. Related settings include selective classification with abstention~\citep{elyaniv2010foundations,geifman2017selective}, the selective-labels problem~\citep{lakkaraju2017selective}, and graphical recoverability criteria~\citep{bareinboim2012controlling,bareinboim2014transportability}. These methods model selection explicitly. We use the same perspective to diagnose selection in a correctness-conditioned CoT evaluation. The distinction between faithfulness and self-consistency~\citep{parcalabescu2024measuring} and tests of explanation faithfulness~\citep{atanasova2023faithfulness} clarify metric semantics, complementing our sample-selection analysis.
\paragraph{Causal inference and shortcut learning.}
Our SCM follows Pearl's structural approach~\citep{pearl2009causality}, and the intervention analysis builds on causal-mediation methods for neural models~\citep{vig2020causal}. The collider $R \to C \leftarrow S$ is an instance of Berkson's paradox~\citep{berkson1946limitations}. Conditioning on post-treatment variables can bias observational studies~\citep{rosenbaum2002observational}, and our variable $S$ maps to the taxonomy of shortcut learning~\citep{geirhos2020shortcut}. Evidence that LLMs use hidden cues~\citep{chen2025reasoning} and that training against monitored reasoning can promote obfuscation~\citep{baker2025monitorability} motivates including shortcut competence in the SCM.
% END sections/related_work
% BEGIN sections/theory
\section{A Diagnostic Framework for Selection Bias}
\label{sec:theory}

We model correctness conditioning as selection in a structural causal model (SCM). We first define the evaluation variables and derive the selection gap, then analyse difficulty confounding and shortcut-induced collider bias under their respective assumptions.

Figure~\ref{fig:response_rules} illustrates why factual correctness alone does not establish counterfactual compliance. Both rules answer the original task correctly. After the edit, recomputing the dependency returns the new oracle answer, while copying the unchanged cue fails. The rules illustrate the evaluation protocol; they are not recorded model responses or latent types identified in the deployment data.

\begin{figure}[!htbp]
\centering
\includegraphics[width=\textwidth]{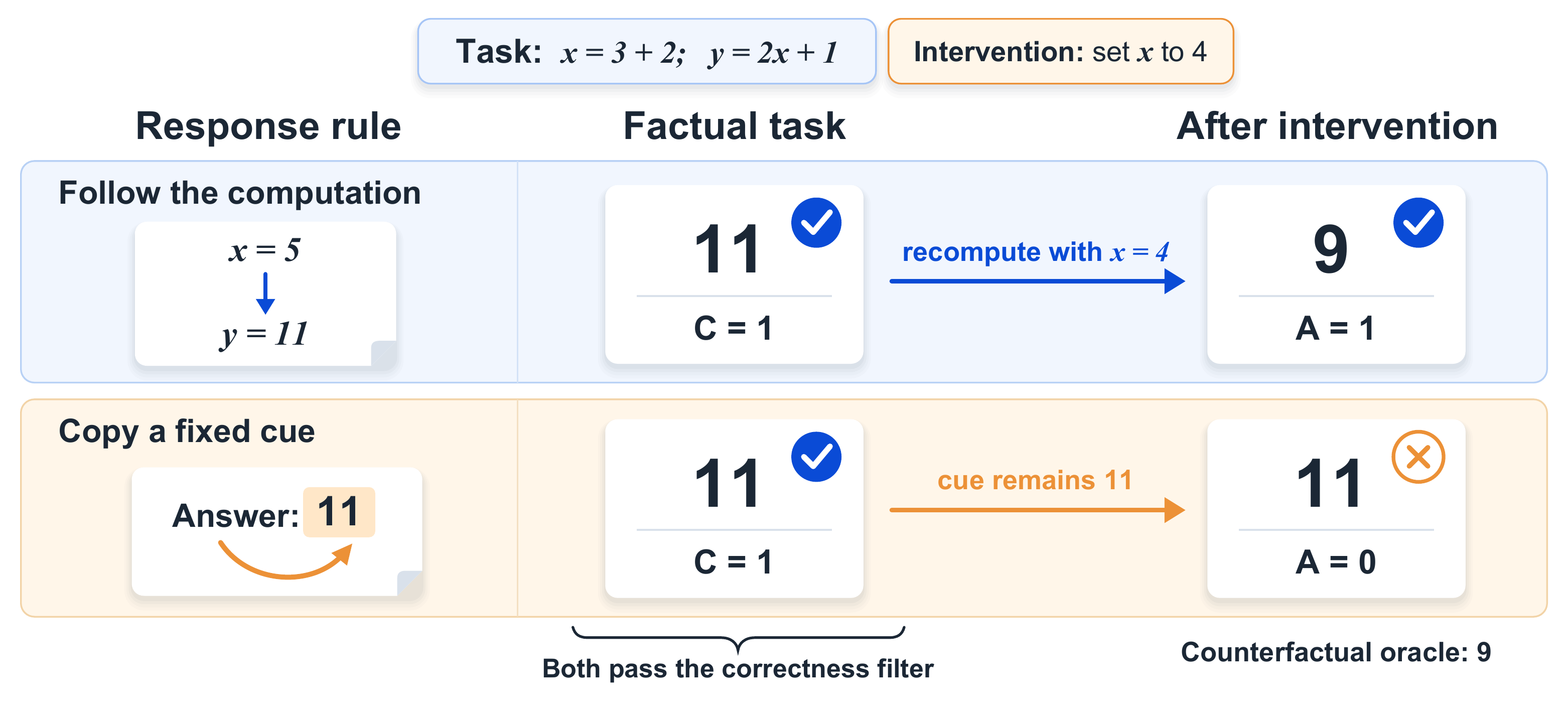}
\caption{Correct factual answers can conceal different intervention responses. The two rules agree before the edit and diverge afterward.}
\label{fig:response_rules}
\end{figure}

\subsection{Structural Causal Model}
\label{sec:scm}

\begin{definition}[Faithfulness Evaluation SCM]
\label{def:scm}
Let $(\mathbf{U}, \mathbf{V}, \mathcal{F})$ be a stochastic SCM with endogenous variables $\mathbf{V} = \{D, S, R, T, C, I, A\}$ and mutually independent exogenous noise $\mathbf{U} = \{U_D, U_S, U_R, U_T, U_C, U_I, U_A\}$. The Bernoulli kernels below are shorthand for structural functions of the corresponding independent noise terms:
\begin{align}
D &= f_D(U_D), \label{eq:D} \\
S &\sim \mathrm{Bernoulli}(p_S(D)), \label{eq:S} \\
R &\sim \mathrm{Bernoulli}(p_R(D)), \label{eq:R} \\
T &\sim \mathrm{Bernoulli}(p_T(R, D)), \label{eq:T} \\
C &\sim \mathrm{Bernoulli}(\pi(R, S, T, D)), \label{eq:C} \\
I &\sim \mathrm{Bernoulli}(\rho), \label{eq:I} \\
A &\sim \mathrm{Bernoulli}(\eta(R, S, T, D, I)), \label{eq:A}
\end{align}
where $D$ is task difficulty, $S$ is shortcut competence (whether the model has a non-chain-of-thought route to the answer), $R$ is chain-of-thought (CoT) reliance (whether the model uses the visible reasoning trace), $T$ is trace correctness, $C$ is answer correctness (the collider), $I$ is randomized intervention assignment, and $A$ is appropriate post-intervention answer change (the compliance outcome).
\end{definition}

The Bernoulli kernels represent binary outcomes. Filtering on $C=1$ retains traces from correctly answered tasks; $A$ records whether the answer changes appropriately after an intervention on the reasoning chain. Two structures can therefore produce a gap between the selected-trace and population estimates:

\begin{equation}
\underbrace{R \to C \leftarrow S}_{\text{collider (needs } S>0\text{)}} \qquad \text{and} \qquad \underbrace{D \to C,\; D \to A}_{\text{difficulty confounding}}.
\label{eq:both_paths}
\end{equation}

Section~\ref{sec:difficulty} relates the confounding structure to the measured covariate. Section~\ref{sec:collider_case} derives the collider pathway under an active shortcut branch; the neutral-framing runs provide no detected shortcut use with which to test it (note $T \perp S \mid R, D$).

In Figure~\ref{fig:scm}, orange edges identify the collider $R\to C\leftarrow S$, teal edges identify the difficulty paths $D\to C$ and $D\to A$, and gray edges show the remaining structural dependencies. The double border marks conditioning on $C=1$; the dashed red line denotes an association that conditioning can open between reliance and shortcut competence within a fixed difficulty stratum. It is not a causal edge.

\begin{figure}[t]
\centering
\includegraphics[width=\textwidth]{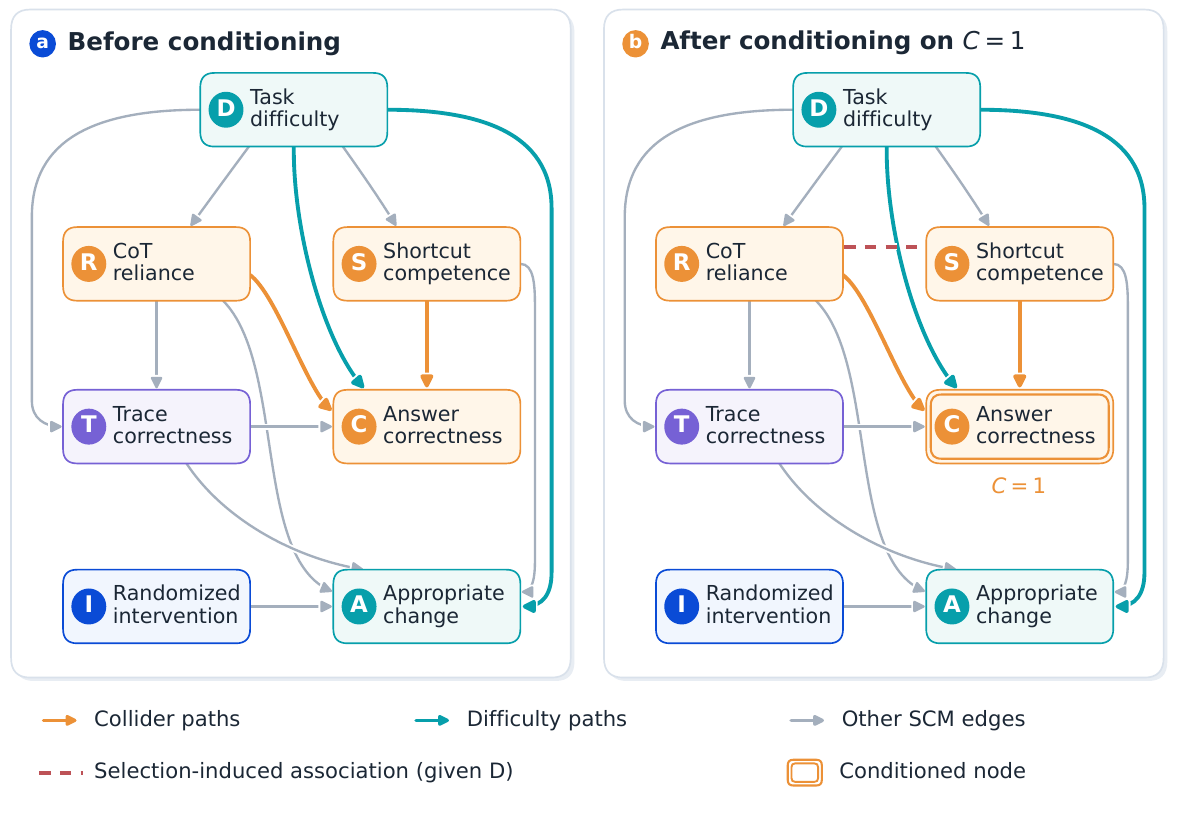}
\caption{Structural causal model before and after correctness conditioning. Both panels show all dependencies in Definition~\ref{def:scm}; the dashed line marks a selection-induced association.}
\label{fig:scm}
\end{figure}

\subsection{The General Bias Formula}
\label{sec:bias}

Within a fixed difficulty stratum $D = d$, define the $T$-marginalized cell probabilities:
\begin{align}
\pi_{rs} &\coloneqq P(C = 1 \mid R = r, S = s, D = d), \label{eq:pi_rs} \\
\eta_{rs} &\coloneqq P(A = 1 \mid R = r, S = s, I = 1, D = d). \label{eq:eta_rs}
\end{align}
Assume $R \perp S \mid D = d$ with $P(R = 1 \mid D = d) = p_r$ and $P(S = 1 \mid D = d) = p_s$. In our SCM, $R = f_R(D, U_R)$ and $S = f_S(D, U_S)$ where $U_R$ and $U_S$ are independent exogenous variables. The Markov property gives $R \perp S \mid D$: reasoning ability and shortcut competence share only task difficulty as a common cause.

\begin{definition}[Population and Selected-Trace Faithfulness]
\label{def:faithfulness}
For causal interventions ($I = 1$), within a fixed difficulty stratum $D = d$:
\begin{align}
\Fpop^{(d)} &\coloneqq P(A = 1 \mid I = 1, D = d), \label{eq:fpop_d} \\
\Fsel^{(d)} &\coloneqq P(A = 1 \mid I = 1, C = 1, D = d), \label{eq:fsel_d} \\
\Delta^{(d)} &\coloneqq \Fsel^{(d)} - \Fpop^{(d)}. \label{eq:delta_d}
\end{align}
The marginal analogues reported in Table~\ref{tab:empirical} drop the stratification, with $\Fpop$, $\Fsel$, and $\Delta \coloneqq \Fsel - \Fpop$ defined by the same expressions without conditioning on $D$.
\end{definition}

\begin{proposition}[General Bias Formula]
\label{prop:collider_bias}
Assume $A \perp C \mid \{R, S, I, D\}$ (compliance is conditionally independent of correctness given the strategy type and intervention). Then the selection bias equals the covariance of intervention-response propensity and correctness propensity, normalized by the selection probability:
\begin{equation}
\Delta = \frac{\Cov(\eta_{RS}, \pi_{RS} \mid D = d)}{P(C = 1 \mid D = d)},
\label{eq:delta_cov}
\end{equation}
where $\eta_{RS}$ and $\pi_{RS}$ are random cell-level functions of $(R, S)$ under the population distribution.
\end{proposition}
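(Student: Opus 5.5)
The plan is to work entirely inside the fixed stratum $D=d$ and under $I=1$, and to express both $\Fsel^{(d)}$ and $\Fpop^{(d)}$ as averages over the four cells $(r,s)\in\{0,1\}^2$. The cell weights are $w_{rs} \coloneqq P(R=r,S=s\mid D=d)$. By the assumed $R\perp S\mid D=d$ these factor as $p_r^{r}(1-p_r)^{1-r}p_s^{s}(1-p_s)^{1-s}$, although the argument will only use that the $w_{rs}$ form a probability distribution. This distribution defines the random variables $\eta_{RS}$ and $\pi_{RS}$ in the statement.

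\textbf{Step 1 (population term).} Because $I$ is randomized, $I\perp(R,S)\mid D$, so conditioning on $I=1$ leaves the cell weights unchanged. The law of total probability then gives
\[
\Fpop^{(d)}=\sum_{r,s} w_{rs}\,\eta_{rs}=\mathbb{E}[\eta_{RS}\mid D=d].
\]
Similarly, $C$ does not depend on $I$ in Definition~\ref{def:scm}, so $P(C=1\mid D=d)=\sum_{r,s}w_{rs}\pi_{rs}=\mathbb{E}[\pi_{RS}\mid D=d]$.

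\textbf{Step 2 (selected term).} By Bayes' rule,
\[
\Fsel^{(d)}=\frac{P(A=1,C=1\mid I=1,D=d)}{P(C=1\mid I=1,D=d)}.
\]
For the numerator, I will decompose over cells and use the hypothesis $A\perp C\mid\{R,S,I,D\}$ to factor $P(A=1,C=1\mid r,s,I=1,d)=\eta_{rs}\,\pi'_{rs}$, where $\pi'_{rs}=P(C=1\mid r,s,I=1,d)$. Since $C$ is not a descendant of $I$ and $I$ is independent of everything upstream, $\pi'_{rs}=\pi_{rs}$. This gives
\[
\Fsel^{(d)}=\frac{\mathbb{E}[\eta_{RS}\pi_{RS}\mid D=d]}{\mathbb{E}[\pi_{RS}\mid D=d]}.
\]

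\textbf{Step 3 (combine).} Subtracting Step~1 and placing everything over the common denominator $\mathbb{E}[\pi_{RS}\mid d]$ gives the numerator $\mathbb{E}[\eta\pi]-\mathbb{E}[\eta]\mathbb{E}[\pi]=\Cov(\eta_{RS},\pi_{RS}\mid D=d)$. This is exactly \eqref{eq:delta_cov}, with $\Delta$ read as $\Delta^{(d)}$. I will also note the implicit positivity requirement $P(C=1\mid D=d)>0$.

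The main obstacle is the bookkeeping in Step~2, where $T$ and $U$-noise must be handled correctly. The marginalized $\pi_{rs}$ and $\eta_{rs}$ average over $T$, and both $C$ and $A$ depend on $T$. Without the stated conditional-independence premise the joint probability would be $\mathbb{E}_T[\eta\pi]$ rather than $\eta_{rs}\pi_{rs}$, and the factorization would fail. So I will state explicitly that the premise is applied at the $(R,S)$-cell level after marginalizing $T$, and that it is an assumption rather than a consequence of the SCM. I will then check that the $I$-invariance claims follow from $U_I$ being independent and $I$ entering only $A$.
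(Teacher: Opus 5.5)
Your proposal is correct and follows the paper's proof: expand $\Fsel$ by Bayes' rule over the $(R,S)$ cells, use the premise to factor the joint as $\eta_{rs}\pi_{rs}$, identify $\Fpop=E[\eta_{RS}]$ and $P(C{=}1)=E[\pi_{RS}]$, and subtract over the common denominator. Your additions make explicit what the paper leaves implicit: randomized $I$ leaves the cell weights and $\pi_{rs}$ unchanged, the positivity condition $P(C{=}1\mid D{=}d)>0$ is needed, and the argument does not actually use $R\perp S\mid D$.
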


\begin{proof}
Expanding $\Fsel$ using Bayes' rule:
\begin{equation}
\Fsel = \sum_{r,s} \eta_{rs} \cdot \frac{\pi_{rs} \, P(R{=}r) \, P(S{=}s)}{P(C{=}1)} = \frac{E[\eta_{RS} \, \pi_{RS}]}{P(C{=}1)}.
\end{equation}
Since $\Fpop = E[\eta_{RS}]$ and $P(C{=}1) = E[\pi_{RS}]$:
\begin{equation}
\Delta = \frac{E[\eta_{RS} \, \pi_{RS}]}{E[\pi_{RS}]} - E[\eta_{RS}]
= \frac{E[\eta_{RS} \, \pi_{RS}] - E[\eta_{RS}] \, E[\pi_{RS}]}{E[\pi_{RS}]}
= \frac{\Cov(\eta_{RS}, \pi_{RS})}{P(C{=}1)}.
\end{equation}
\end{proof}

The observed selection gap has an assumption-free expression. With $\Fsel = E[AC]/E[C]$ and $\Fpop = E[A]$,
\begin{equation}
\Delta = \frac{E[AC]}{E[C]} - E[A] = \frac{E[AC] - E[A]\,E[C]}{E[C]} = \frac{\Cov(A, C)}{P(C{=}1)},
\label{eq:bridge}
\end{equation}
using $P(C{=}1) = E[C]$. Equation~\ref{eq:bridge} is an algebraic identity on the observed pair $(A, C)$, with no latent-variable or independence assumption. This is the gap estimated in our experiments. The factorization in \eqref{eq:delta_cov} requires the additional premise $A \perp C \mid \{R, S, I, D\}$: trace correctness $T$ enters both \eqref{eq:C} and \eqref{eq:A}, leaving $C \leftarrow T \rightarrow A$ open after conditioning on $\{R, S, I, D\}$. The premise therefore constrains the SCM rather than following from it. With this constraint, $\Cov(\eta_{RS}, \pi_{RS}) = \Cov(A, C)$; without it, the two covariances can differ.

Stratifying on an observed variable gives a separate, assumption-free decomposition. With $D$ the measured difficulty proxy, the law of total covariance gives
\begin{equation}
\Cov(A, C) = \underbrace{\Cov\!\left(E[A \mid D],\, E[C \mid D]\right)}_{\text{between-stratum}} + \underbrace{E\!\left[\Cov(A, C \mid D)\right]}_{\text{within-stratum}},
\label{eq:total_cov}
\end{equation}
which holds for any joint distribution of $(A, C, D)$. We use \eqref{eq:total_cov} in Section~\ref{sec:empirical} to measure variation between strata of the observed proxy. The within-stratum term collects residual covariance without a mechanistic attribution.

Correctness filtering is unbiased when $\Cov(A,C) = 0$. This includes the boundary $P(C{=}1) = 1$ and, under the conditional-independence premise, the case in which $\eta_{RS}$ is constant across cells. Equation~\eqref{eq:delta_cov} expresses the gap through cell-level propensities under that premise, while \eqref{eq:bridge} describes the observed gap directly. The theory identifies the stratum-specific $\Delta^{(d)}$; the marginal $\Delta$ of Table~\ref{tab:empirical} additionally includes the between-stratum composition term (Appendix~\ref{sec:estimand}).

\subsection{Difficulty Confounding}
\label{sec:difficulty}

When task difficulty $D$ reduces both correctness ($D \to C$) and compliance ($D \to A$), conditioning on $C = 1$ selects easier tasks with higher compliance. We measure the associated stratified variation using operation count as a difficulty proxy (Section~\ref{sec:empirical}). Between-stratum variation accounts for 21.9--42.8\% of the observed gap. This is a descriptive share; identifying the causal contribution of latent difficulty requires varying difficulty while holding other causes of correctness fixed.

\begin{proposition}[Difficulty-Confounding Bias]
\label{prop:difficulty}
Let $\pi(d) = P(C = 1 \mid D = d)$ and $\eta(d) = P(A = 1 \mid I = 1, D = d)$. Assume $A \perp C \mid \{D, I\}$. When both $\pi$ and $\eta$ decrease with difficulty (harder tasks have lower accuracy and lower compliance), conditioning on $C = 1$ selects easier instances and inflates the faithfulness estimate:
\begin{equation}
\Delta = \frac{\Cov(\eta(D), \pi(D))}{P(C = 1)},
\label{eq:delta_difficulty}
\end{equation}
the same covariance form as Proposition~\ref{prop:collider_bias} with $(R,S)$ replaced by the observed variable $D$. Whenever harder tasks produce both lower accuracy and lower compliance ($\mathrm{Cov}(\eta(D), \pi(D)) > 0$), the bias is strictly positive.
\end{proposition}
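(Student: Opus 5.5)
The proof mirrors that of Proposition~\ref{prop:collider_bias}, with the cell index $(R,S)$ replaced by the observed variable $D$. Everything is computed under the intervention arm $I=1$. Since $I$ is randomized independently of $D$ and of the causes of $C$ (Definition~\ref{def:scm}), we have $P(C=1 \mid D=d, I=1) = \pi(d)$, and the distribution of $D$ given $I=1$ equals its population distribution. Both $\Fsel$ and $\Fpop$ are therefore expectations over the population law of $D$.

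\textbf{Step 1: write $\Fsel$ as a weighted average.} By Bayes' rule, the selected density of $D$ given $C=1, I=1$ is $\pi(d)\,P(D\in dd)/P(C=1)$. By iterated expectation, $\Fsel = E[P(A=1 \mid C=1, I=1, D)\,\pi(D)]/P(C=1)$.

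\textbf{Step 2: apply the premise.} The assumption $A \perp C \mid \{D, I\}$ gives $P(A=1 \mid C=1, I=1, D=d) = \eta(d)$. Hence $\Fsel = E[\eta(D)\pi(D)]/E[\pi(D)]$, using $P(C=1) = E[\pi(D)]$.

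\textbf{Step 3: center.} Since $\Fpop = E[\eta(D)]$,
\[
\Delta = \frac{E[\eta(D)\pi(D)] - E[\eta(D)]\,E[\pi(D)]}{E[\pi(D)]} = \frac{\Cov(\eta(D),\pi(D))}{P(C=1)}.
\]
This is \eqref{eq:delta_difficulty}. Steps 1 through 3 are routine algebra once the premise is available.

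\textbf{Step 4: sign claim.} The denominator is positive whenever $P(C=1)>0$, which is needed for $\Fsel$ to be defined at all. So $\Delta>0$ exactly when the covariance is positive, which gives the conditional ``whenever'' statement directly.

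To support the informal claim that joint monotonicity yields positivity, I would invoke Chebyshev's association inequality. For two functions of a single real random variable that are both nonincreasing, $E[\eta(D)\pi(D)] \ge E[\eta(D)]E[\pi(D)]$. The short proof symmetrizes with an independent copy $D'$: the product $(\eta(D)-\eta(D'))(\pi(D)-\pi(D'))$ is nonnegative pointwise, and its expectation equals $2\Cov(\eta(D),\pi(D))$. Equality holds only if, almost surely over independent pairs $(D,D')$, at least one of the two functions takes the same value at $D$ and $D'$. So strict positivity requires both functions to be nonconstant on a common set of positive-probability pairs. I would state this caveat explicitly, since it is why the proposition phrases strictness in terms of $\Cov>0$.

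\textbf{Main obstacle.} I expect the main difficulty to be justifying the premise, not the algebra. As noted after \eqref{eq:bridge}, $T$ and the latent $(R,S)$ open the paths $C \leftarrow \{R,S,T\} \rightarrow A$ even within a fixed $D$. So $A \perp C \mid \{D,I\}$ does not follow from the SCM and must be taken as an assumption. Without it, the identity still holds with the covariance replaced by the between-stratum term plus the within-stratum term of \eqref{eq:total_cov}. The premise is exactly the condition that kills the within-stratum term.
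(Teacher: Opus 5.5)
Your proposal is correct and follows the paper's route: the paper obtains \eqref{eq:delta_difficulty} by reusing the Bayes-rule expansion and centering from the proof of Proposition~\ref{prop:collider_bias} with $D$ in place of $(R,S)$, which is your Steps 1--3. Your explicit use of $C \perp I \mid D$ and $D \perp I$, and your Chebyshev association argument (monotonicity alone gives only $\Cov(\eta(D),\pi(D)) \ge 0$), fill in details the paper leaves implicit. One remark in your closing paragraph is slightly overstated: the premise makes every stratum covariance $\Cov(A, C \mid D{=}d, I{=}1)$ vanish, which is sufficient but not necessary for the within-stratum term of \eqref{eq:total_cov} to vanish, because that term is an average that can cancel across strata.
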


Proposition~\ref{prop:difficulty} holds under the additional assumption $A \perp C \mid \{D, I\}$, satisfied when compliance depends only on task difficulty and intervention type. This premise excludes residual dependence beyond $D$ and must be imposed in addition to \eqref{eq:C}--\eqref{eq:A}. With a measured difficulty proxy, we estimate within-stratum correctness and compliance for the adjustment in Section~\ref{sec:empirical}. All-trace evaluation removes correctness selection for either pathway.

\subsection{Shortcut Competence and Collider Bias}
\label{sec:collider_case}

With latent shortcut competence $S > 0$, conditioning on correctness opens the collider $R \to C \leftarrow S$. Under the perfect-faithful-reasoning special case (Corollary~\ref{cor:simplified}), the gap decreases monotonically in shortcut prevalence (Corollary~\ref{cor:ps_monotone}) and peaks at $p_r^* = \sqrt{p_s}/(1+\sqrt{p_s})$ in CoT reliance (Corollary~\ref{cor:hump}). Proposition~\ref{prop:reversal} gives the conditions for reversing two models' faithfulness rankings. These predictions concern an active shortcut branch. The neutral-framing detector records $0\%$ use (\S\ref{sec:empirical}); separate explicit-reliance probes elicit shortcut use on several deployments (Appendix~\ref{sec:shortcut_elicitation}), but do not test the peak or reversal predictions. Appendix~\ref{sec:collider_full} gives the derivations and proofs.

\subsection{Implications for Evaluation Practice}
\label{sec:implications}

Evaluators should report the population on which a faithfulness score is computed. Correctness selection leaves the score unchanged when compliance and correctness are uncorrelated, including perfect accuracy or, under the stated premise, constant compliance. Under the special case in Corollary~\ref{cor:simplified}, the gap peaks at intermediate CoT reliance (Corollary~\ref{cor:hump}).

When compliance can be measured for every instance, including incorrect answers, evaluating all traces computes $\Fpop$ directly. Most natural-language settings lack a counterfactual oracle. Evaluators should then state the correctness condition and measurement limits, reporting $\Fsel$ and $\Fpop$ when both are estimable.
% END sections/theory
% BEGIN sections/audit_body
\section{Reporting Populations in Prior Evaluations}
\label{sec:audit}

We inspect 18 papers on CoT faithfulness and related evaluation protocols. The set spans trace interventions, input perturbations, benchmark construction, and training procedures. We record the population used by each inspected analysis rather than assign a single filtering label to an entire paper. Table~\ref{tab:audit_full} gives version-specific source locations; Figure~\ref{fig:audit_landscape} summarizes the distinctions.

\paragraph{Review procedure.} We inspect metric definitions, evaluation sections, and relevant appendices for the 18 papers in Table~\ref{tab:audit_full}. For each protocol, we distinguish selection of training examples, construction of a benchmark, and restriction of the final evaluation denominator. This targeted review compares concrete protocol choices; a prevalence estimate would require a systematic sampling frame and a common unit of analysis.

\paragraph{Answer correctness.} Arcuschin et al. restrict their Putnam shortcut analysis to correct responses, while their separate comparison-pair analysis uses all retained input pairs~\citep{arcuschin2025wild}. Bogdan et al. use correct CoTs for MMLU step-importance analysis but outcome-balanced traces for their MATH analysis~\citep{bogdan2025anchors}. Each analysis has its own reporting population. A correct-only score estimates a conditional quantity; our analysis quantifies its difference from the population score.

\paragraph{Other selection rules.} Tutek et al. restrict their faithfulness comparison to instances on which CoT and no-CoT predictions agree~\citep{tutek2025unlearning}. Agreement does not require either answer to be correct. Chen et al. select pairs where adding a hint changes the answer to that hint~\citep{chen2025reasoning}; correctness then depends on the hint's validity. Bentham et al. average with/without-CoT agreement over the dataset~\citep{al2024chain}. Their accuracy-confound analysis does not establish a correctness-filtered denominator.

\begin{figure}[!htbp]
\centering
\includegraphics[width=\textwidth]{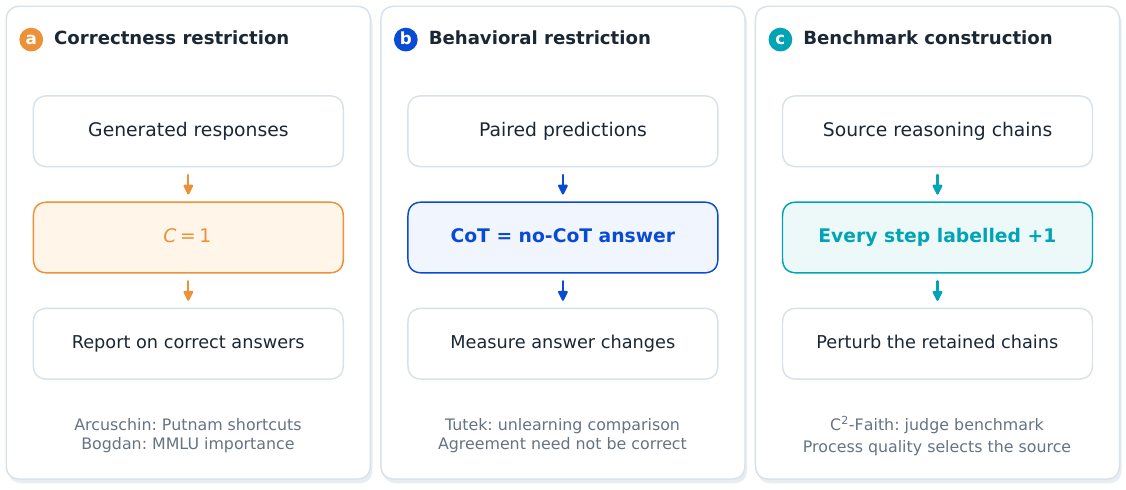}
\caption{\textbf{Selection rules in evaluation protocols.} Answer correctness, behavioral agreement, and data construction define different populations.}
\label{fig:audit_landscape}
\end{figure}

\paragraph{Construction and training.} C$^2$-Faith perturbs chains whose steps all have positive human labels~\citep{c2faith2026}; this process-quality criterion is distinct from filtering an evaluated model's final answers. Bona Fide constructs ground truth from correct outright answers or responses that follow a wrong hint~\citep{bonafide2026}. Correct-answer filtering also appears in the training pool of Knowledge-Driven CoT~\citep{al2023knowledge}. Comparing these protocols requires specifying when selection occurs and which population the score describes. Appendix~\ref{sec:audit_rule} details these distinctions.
% END sections/audit_body
% BEGIN sections/methodology
\section{Experimental Design}
\label{sec:method}

We compare the correctness-filtered estimator $\Fsel$ with the all-trace estimator $\Fpop$ on procedural tasks with known causal structure. Deterministic counterfactual answers make compliance observable for every task, and configurable cues allow separate tests of shortcut reliance.

\subsection{Procedural Task Design}
\label{sec:tasks}

Intervention-based evaluation requires a reference answer for the edited task. We construct procedural tasks in which a symbolic executor replays the dependency graph and computes this counterfactual answer exactly~\citep{al2023faithful,al2026counterfactual}. The resulting oracle supports the same compliance measurement on correctly and incorrectly answered instances. Natural-language tasks generally lack a unique, verifiable answer after an edit; extending the results to that setting requires a separate evaluation.

We implement three task families with deterministic oracles and semantics-preserving placebo edits: \textbf{arithmetic DAGs} of integer operations (depth 2--4), \textbf{Boolean circuits} of logical gates (depth 2--8), and \textbf{graph reachability} on 5--12 nodes. Each family exposes an editable dependency structure. The pilot generators produce 100 candidate instances per family from deterministic seeds. We use smaller batches for the implementation check (Table~\ref{tab:simulation}) and separate run sizes for the deployment study (Table~\ref{tab:empirical}). A faithful baseline ($n = 50$) validates the evaluation procedure, giving $\Fpop = \Fsel = 1.000$ (Appendix~\ref{sec:measurement}).

Figure~\ref{fig:task_atlas} gives a worked example from each family. Arithmetic and Boolean edits override an intermediate value before replaying downstream operations. Graph edits change an input edge and recompute reachability. These examples illustrate the generators; the headline deployment results use arithmetic DAGs.

\begin{figure}[!htbp]
\centering
\includegraphics[width=\textwidth,trim=0 74bp 0 0,clip]{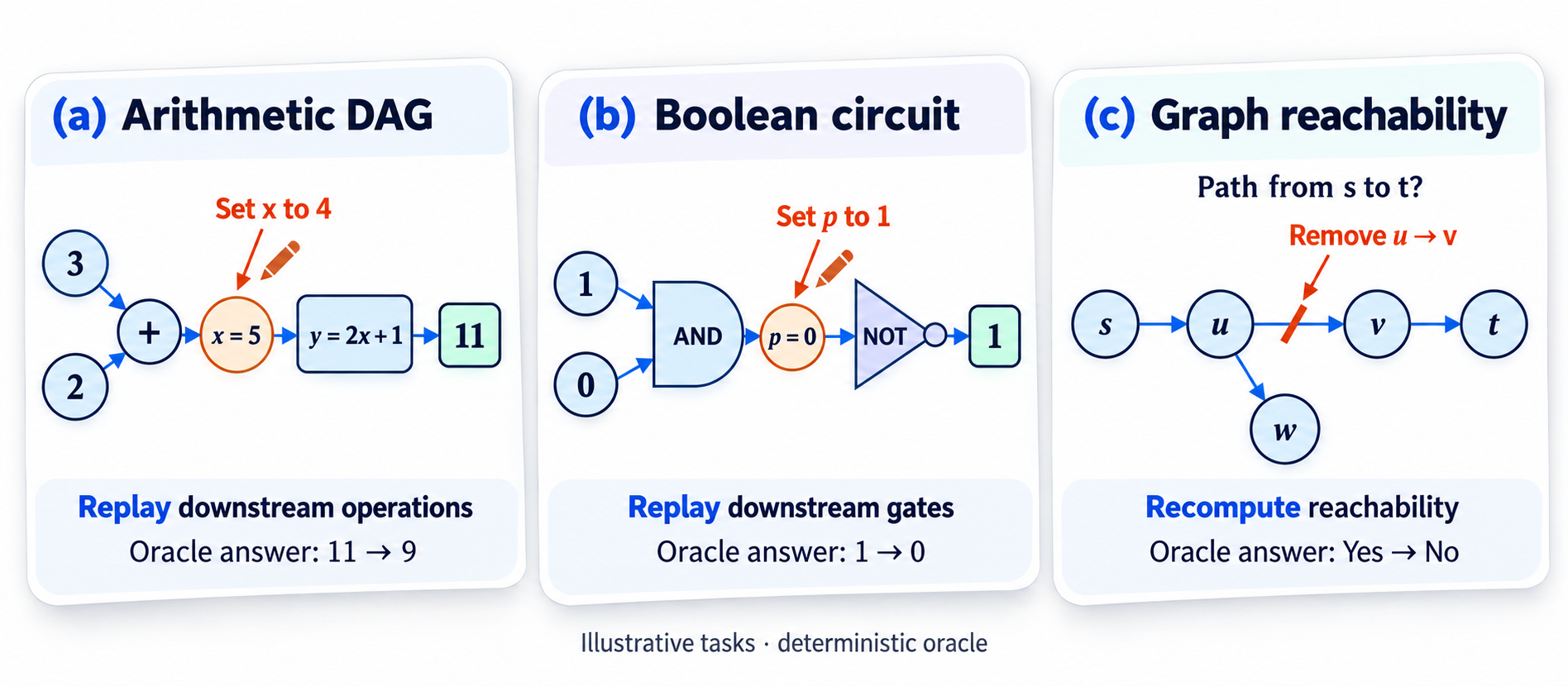}
\caption{\textbf{Procedural task families.} Each example pairs the original task with an intervention and a deterministic counterfactual answer.}
\label{fig:task_atlas}
\end{figure}

\paragraph{Planted shortcut channels.}
\label{par:shortcut_design}
In the procedural tasks, each computed step depends on earlier values. Without a planted cue, the design targets $S \approx 0$, suppressing the shortcut pathway; $\Delta = 0$ additionally requires the absence of other correctness--compliance dependence. To study shortcut use, we augment instances with a cue whose validity is controlled by $q$, the probability that it agrees with the oracle answer. The cue is not updated under intervention, so a cue-reliant model can pass factual correctness while failing counterfactual compliance. The generator control $q$ is distinct from the theoretical $p_s = P(S{=}1\mid D)$, which denotes the prevalence of shortcut-capable behavior in the SCM and is set directly in the implementation check.

\subsection{All-Trace Intervention Protocol}
\label{sec:protocol}

For each instance $i$, the language model generates a trace and answer, and the symbolic executor supplies the reference answer. We present three conditions independently. The \textbf{factual condition} uses the original task and records correctness $C_i = \mathbb{1}[\hat{y}_i^{\mathrm{fact}} = y_i^*]$. The \textbf{causal intervention condition} changes one intermediate graph value; the oracle replays the downstream computation to obtain $y_i^{\mathrm{cf}*}$, and we record compliance $A_i = \mathbb{1}[\hat{y}_i^{\mathrm{cf}} = y_i^{\mathrm{cf}*}]$. The \textbf{placebo condition} applies a semantics-preserving edit and records $A_i^{\mathrm{plac}} = \mathbb{1}[\hat{y}_i^{\mathrm{plac}} = y_i^*]$.

Each condition is a standalone task. In the headline analysis, we score non-interventionable traces with missing or unparseable answers as non-compliant ($A_i = 0$). We also repeat the analysis with these responses treated as missing (Appendix~\ref{sec:compliance_discussion}). The headline convention gives a larger estimated gap.

Figure~\ref{fig:evaluation_protocol} illustrates the three conditions and their oracle-based scores. The example is schematic; the placebo condition validates the instance set and is not a per-model result.

\begin{figure}[!htbp]
\centering
\includegraphics[width=\textwidth]{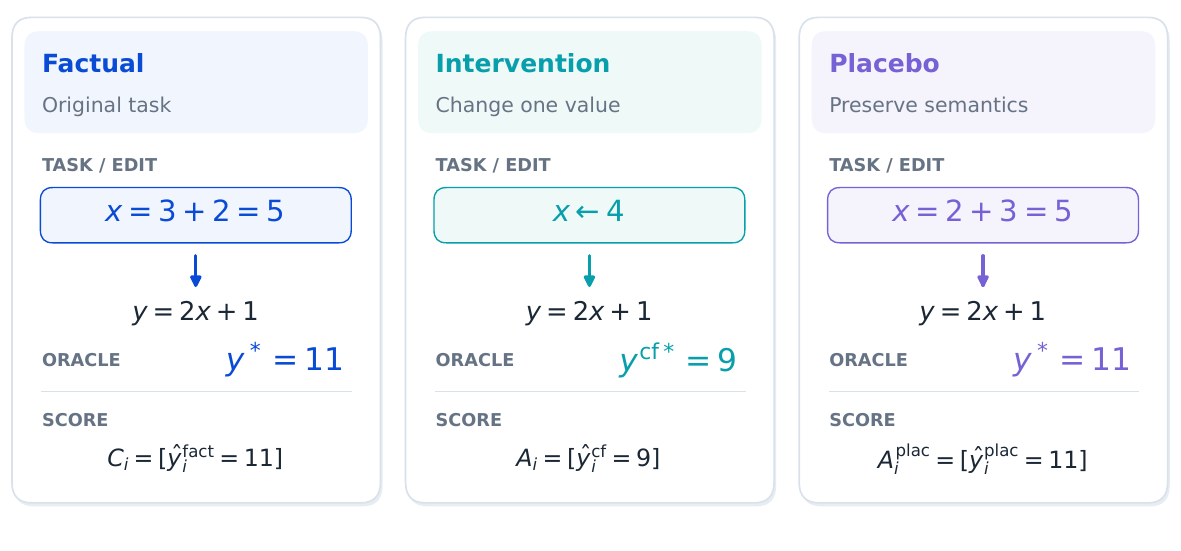}
\caption{\textbf{All-trace intervention protocol.} On this illustrative arithmetic task, a value intervention changes the oracle answer and a semantics-preserving edit leaves it unchanged.}
\label{fig:evaluation_protocol}
\end{figure}

\subsection{Evaluation Metrics}
\label{sec:metrics}

The primary estimands are:
\begin{align}
\Fpop &= \frac{1}{n} \sum_{i=1}^{n} A_i, \qquad
\Fsel = \frac{\sum_{i=1}^{n} A_i \cdot C_i}{\sum_{i=1}^{n} C_i}, \qquad
\Delta = \Fsel - \Fpop. \label{eq:estimands}
\end{align}

We estimate confidence intervals with a nonparametric bootstrap ($B = 10{,}000$ resamples of \emph{instances}, the unit at which the model was queried). The pre-specified primary analysis is a single pooled Cochran--Mantel--Haenszel test. Individual model $p$-values are descriptive; we apply no multiplicity adjustment to the single primary hypothesis. All headline model runs use temperature $= 0$ (greedy decoding).
% END sections/methodology
% BEGIN sections/empirical
\section{Deployment Results}
\label{sec:empirical}

We evaluate four sub-ceiling deployments from ZhipuAI and DeepSeek on arithmetic DAGs, alongside two 100\%-accuracy boundary controls from Alibaba and Google (Table~\ref{tab:empirical}). We first measure the selection gap, then decompose it by operation count and test its sensitivity to adjustment and response handling.

\paragraph{Correctness conditioning raises the point estimates.}
All four sub-ceiling runs have positive selection gaps. Figure~\ref{fig:deployment_gaps} compares the all-trace and selected estimates, with gap intervals at the precision reported in Table~\ref{tab:empirical}. Two intervals cross zero.

\begin{figure}[!htbp]
\centering
\includegraphics[width=\textwidth]{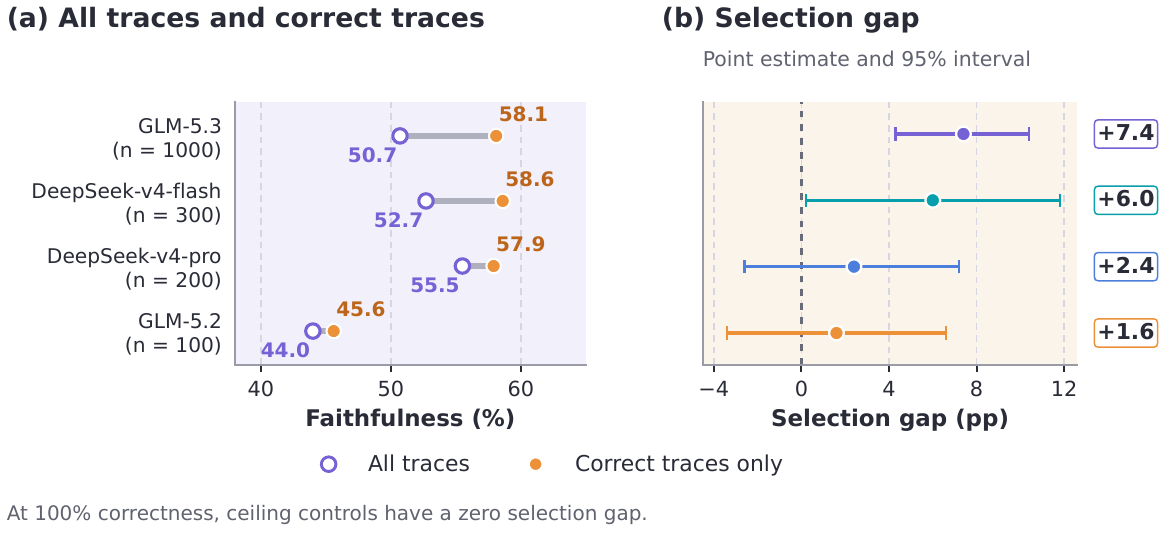}
\caption{\textbf{Faithfulness and selection gaps across deployments.} Open markers denote all traces; filled markers denote selected traces. Intervals show uncertainty in the selection gap.}
\label{fig:deployment_gaps}
\end{figure}

Table~\ref{tab:empirical} reports results for the serving deployments that answered our requests (Appendix~\ref{sec:limitations_full}). We retain missing responses in the run sizes and score them as non-compliant, following Section~\ref{sec:protocol}. Thus $\Fpop$ uses the full run, while $\Fsel$ uses the correctness-selected subset. Table~\ref{tab:twobytwo} gives the joint counts, and asterisks mark two-sided $p<0.05$. The ceiling controls satisfy $P(C=1)=1$ by construction, so their zero gap is an algebraic identity. The Gemini control includes only valid responses (Appendix~\ref{sec:ceiling}).

\begin{table}[htbp]
\caption{\textbf{Faithfulness across serving deployments.} All-trace and correctness-conditioned estimates, with ceiling controls shown separately.}
\label{tab:empirical}
\centering
\fontsize{9}{10.8}\selectfont
\setlength{\tabcolsep}{2.5pt}
\renewcommand{\arraystretch}{1.18}
\arrayrulecolor{black!35}
\setlength{\heavyrulewidth}{0.65pt}
\setlength{\lightrulewidth}{0.35pt}
\begin{tabularx}{\linewidth}{@{}>{\raggedright\arraybackslash}Xccccccc@{}}
\toprule
\rowcolor[HTML]{F1F6FE}
\textbf{\boldmath Model} & \textbf{\boldmath $n$} & \textbf{\boldmath Acc.} & \textbf{\boldmath $\Fpop$} & \textbf{\boldmath $\Fsel$} & \textbf{\boldmath $\Delta$ (pp)} & \textbf{\boldmath 95\% CI (pp)} & \textbf{\boldmath $p$} \\

\midrule
glm-5.2 & 100 & 79.0\% & \cellcolor[HTML]{F5F3FC}0.440 & \cellcolor[HTML]{FFF6E9}0.456 & \cellcolor[HTML]{FFF0DC}\color[HTML]{1B2735}+1.6 & [$-$3.4, +6.6] & 0.540 \\
deepseek-v4-pro & 200 & 66.5\% & \cellcolor[HTML]{F5F3FC}0.555 & \cellcolor[HTML]{FFF6E9}0.579 & \cellcolor[HTML]{FFF0DC}\color[HTML]{1B2735}+2.4 & [$-$2.6, +7.2] & 0.337 \\
\rowcolor[HTML]{EAF1FC}
\textbf{glm-5.3} & \textbf{1000} & \textbf{51.5\%} & \cellcolor[HTML]{F5F3FC}\textbf{0.507} & \cellcolor[HTML]{FFF6E9}\textbf{0.581} & \cellcolor[HTML]{FFF0DC}\color[HTML]{1B2735}\textbf{+7.4} & \textbf{[+4.3, +10.4]} & $\boldsymbol{<}$\textbf{0.0001}$^\ast$ \\
\textbf{deepseek-v4-flash} & \textbf{300} & \textbf{48.3\%} & \cellcolor[HTML]{F5F3FC}\textbf{0.527} & \cellcolor[HTML]{FFF6E9}\textbf{0.586} & \cellcolor[HTML]{FFF0DC}\color[HTML]{1B2735}\textbf{+6.0} & \textbf{[+0.2, +11.8]} & \textbf{0.046}$^\ast$ \\
\midrule
\rowcolor[HTML]{F5F6F7}
\multicolumn{8}{l}{\emph{Boundary controls ($P(C{=}1)=1$, filter vacuous):}} \\
qwen3.8-max & 20 & 100\% & \cellcolor[HTML]{F5F3FC}0.600 & \cellcolor[HTML]{FFF6E9}0.600 & \cellcolor[HTML]{F5F6F7}0.0 & N/A & N/A \\
Gemini 3.5 Flash Lite & 17 & 100\% & \cellcolor[HTML]{F5F3FC}0.529 & \cellcolor[HTML]{FFF6E9}0.529 & \cellcolor[HTML]{F5F6F7}0.0 & N/A & N/A \\
\bottomrule
\end{tabularx}
\end{table}

The primary GLM-5.3 run gives a +7.4pp gap ($p < 0.0001$, $n = 1000$; Figure~\ref{fig:deployment_gaps}), and deepseek-v4-flash gives +6.0pp ($p = 0.046$, $n = 300$).\footnote{Individual model $p$-values are descriptive and two-sided (Pearson $\chi^2$, $df{=}1$); the pre-specified primary analysis is the pooled CMH test.} The remaining two gaps are positive and non-significant. Pooling the four deployments with the Cochran--Mantel--Haenszel test gives OR $= 1.70$ (Appendix~\ref{sec:leave_one_out}). The pooled association survives removal of any single deployment. Excluding GLM-5.3, the largest contributor, reduces the pooled $z$ from 5.19 to 2.24 ($p = 0.025$). Both 100\%-accuracy controls return the required boundary value: $P(C{=}1)=1$ makes the gap exactly zero.

Excluding the only batch requiring deduplication, which contributes 100 of the 1000 rows, leaves $n = 900$ rows with no deduplication. The gap remains $+7.26$pp ($\chi^2 = 20.54$, $p < 0.0001$), so the result does not depend on that batch.

\paragraph{Operation count accounts for part of the gap.}
Conditioning on $C=1$ selects instances with fewer interventionable operations, where compliance is higher. The covariance decomposition \eqref{eq:total_cov} assigns 42.8\% (GLM-5.3) and 21.9\% (flash) of the gap to variation between operation-count strata (Table~\ref{tab:decomp}). The GLM-5.3 estimate is more precise ($n=1000$); flash's smaller strata ($n=300$) yield a bootstrap interval spanning zero. The decomposition depends on the covariate: oracle-displacement magnitude accounts for $26.6\%$ on the same GLM-5.3 records. The 42.8\% share therefore measures an association with operation count and does not identify a causal effect of difficulty. Inverse-propensity weighting on that count removes 23--34\% of the raw gap across two propensity specifications, leaving most of the gap after either adjustment.

The trace-matching detector records $0\%$ shortcut use under neutral framing and $10\%$ in GLM-5.3 under adversarial elicitation. After excluding 9 null rows, the complete-case cue-correctness contrast ($n=291$) is $-1.6$pp ($z = -0.24$, $p = 0.81$), based on 112/208 correct-cue rows and 46/83 incorrect-cue rows. We detect no difference in this contrast, and the detector cannot identify unacknowledged use of a correct cue (Appendix~\ref{sec:discussion_full}). Appendix~\ref{sec:shortcut_elicitation} reports targeted explicit-reliance probes on separate deployments. The headline runs do not test the collider predictions of Section~\ref{sec:collider_case}; their scope is specified in Appendix~\ref{sec:limitations_full}.

Figure~\ref{fig:proxy_decomposition} shows the GLM-5.3 decomposition computed from the joint counts in Table~\ref{tab:depth_strat}. Of the $7.358$pp total gap, $3.146$pp lies between operation-count strata and $4.212$pp lies within them. The within-stratum component remains unattributed. The between-stratum share has a descriptive bootstrap interval of $[26.8,69.1]\%$ (Table~\ref{tab:decomp}); the plot shows the point estimate.

\begin{figure}[!htbp]
\centering
\includegraphics[width=\textwidth]{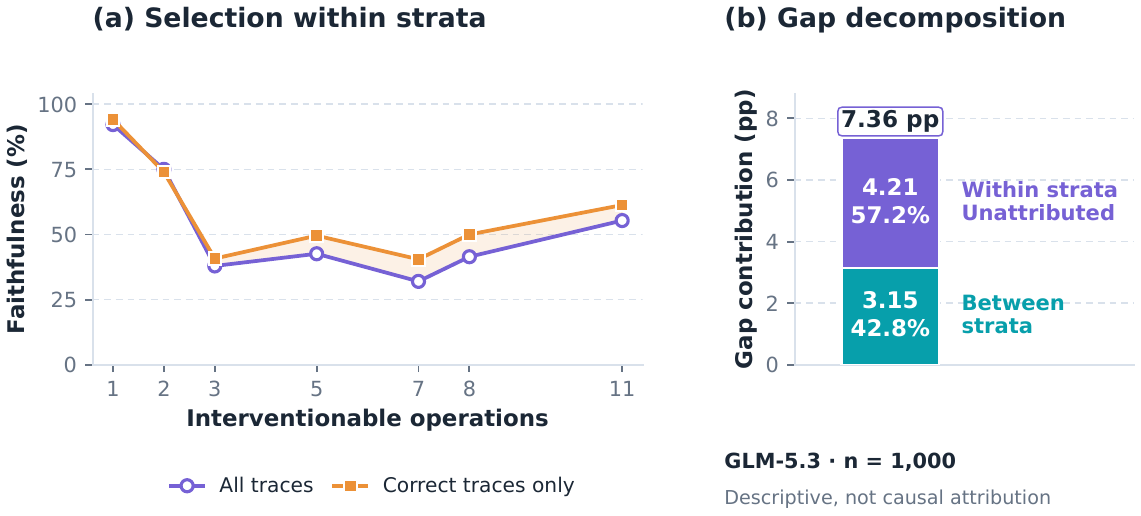}
\caption{\textbf{Operation-count decomposition.} GLM-5.3 faithfulness across strata and the corresponding covariance split. The split is descriptive; the within-stratum component remains unattributed.}
\label{fig:proxy_decomposition}
\end{figure}

\paragraph{The gap persists after adjustment and response filtering.}
Reweighting reduces the selected estimate but leaves it above the direct all-trace estimate (Figure~\ref{fig:robustness_checks}). Dropping null responses also reduces every deployment's gap, with all four point estimates remaining positive. Appendix~\ref{sec:ipw_details} specifies the propensity models, and Appendix~\ref{sec:compliance_discussion} gives the null-handling counts.

\begin{figure}[!htbp]
\centering
\includegraphics[width=\textwidth]{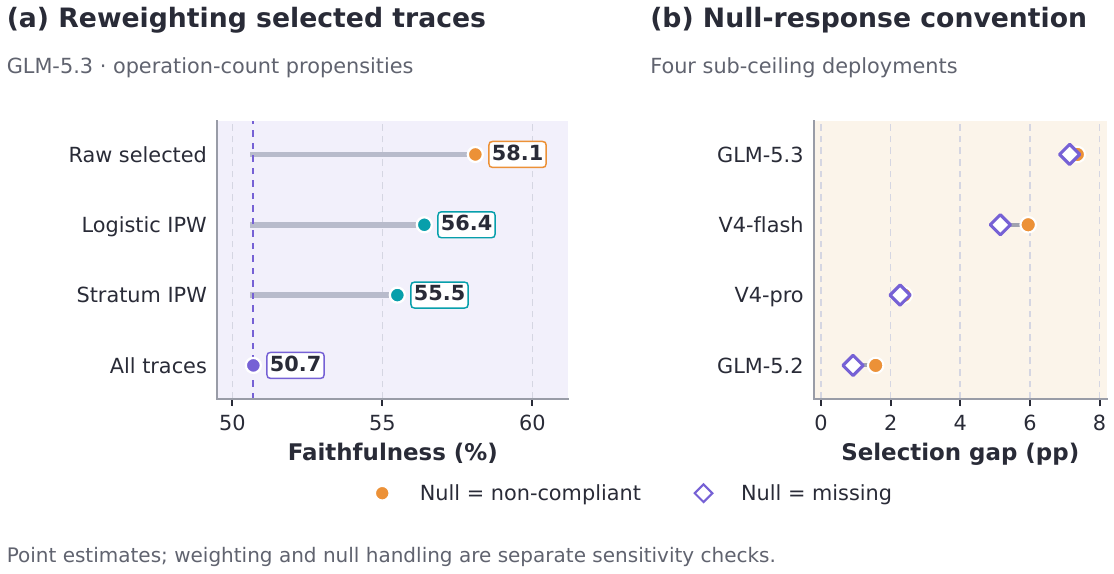}
\caption{\textbf{Sensitivity to adjustment and response handling.} Weighted selected estimates are compared with all traces; deployment gaps are compared under two null-response conventions. Markers show point estimates.}
\label{fig:robustness_checks}
\end{figure}

Table~\ref{tab:empirical} reports the marginal $\Delta = \Fsel - \Fpop$, which includes selected-weighted within-stratum gaps and a between-stratum composition term. The theory identifies the stratum-specific $\Delta^{(d)}$ (Appendix~\ref{sec:estimand}). For GLM-5.3, the marginal gap is $+7.4$pp, while selected-weighted difficulty standardization, $\Delta_{\mathrm{std},C} \coloneqq \sum_d P(D{=}d \mid C{=}1)\,(\Fsel^{(d)} - \Fpop^{(d)})$, gives $+4.2$pp; the naive average gives $+4.81$pp. Using the unrounded joint counts, the between-stratum term of \eqref{eq:total_cov} is approximately $3.15$pp, separate from the within-stratum effect.

\paragraph{The shortcut-mixture special case does not fit these runs.}
The strict-reliance mixture in \S\ref{sec:collider_case} overpredicts the measured gaps. Under its active-shortcut assumption, solving each deployment's $\Fpop$ and $P(C{=}1)$ for an implied $p_s$ under Corollary~\ref{cor:simplified} gives $+0.625$, $+0.247$, $+0.016$ and $-0.092$ for glm-5.2, v4-pro, GLM-5.3 and flash. Flash's negative value falls outside the probability range. Substitution gives gaps of $11.7$--$56.3$pp against $1.6$--$7.4$pp observed, a factor of $6.4$--$11.7$ (Table~\ref{tab:overprediction}). These discrepancies support estimating the observed $(A,C)$ gap and its operation-count decomposition directly, without imposing the mixture assumptions.

Table~\ref{tab:overprediction} reports each deployment's $\Fpop$ and $P(C=1)$, the implied $p_s$ under Corollary~\ref{cor:simplified}, and the corresponding prediction. We retain the flash row to document the incompatibility: its negative implied shortcut prevalence makes that prediction invalid as a probability-model result.

\begin{table}[t]
\caption{\textbf{Observed and predicted selection gaps.} Predictions use the shortcut-mixture special case; the flash prediction lies outside the valid probability regime.}
\label{tab:overprediction}
\centering
\fontsize{9}{10.8}\selectfont
\setlength{\tabcolsep}{4pt}
\renewcommand{\arraystretch}{1.18}
\arrayrulecolor{black!35}
\setlength{\heavyrulewidth}{0.65pt}
\setlength{\lightrulewidth}{0.35pt}
\begin{tabularx}{\linewidth}{@{}>{\raggedright\arraybackslash}Xcccccc@{}}
\toprule
\rowcolor[HTML]{F1F6FE}
\textbf{\boldmath Model} & \textbf{\boldmath $\Fpop$} & \textbf{\boldmath Acc.} & \textbf{\boldmath Implied $p_s$} & \textbf{\boldmath Predicted} & \textbf{\boldmath Observed} & \textbf{\boldmath Ratio} \\
\rowcolor[HTML]{F1F6FE}
 & \textbf{\boldmath ($= p_r$)} & \textbf{\boldmath ($=P(C{=}1)$)} &  & \textbf{\boldmath $\Delta$ (pp)} & \textbf{\boldmath $\Delta$ (pp)} &  \\

\midrule
glm-5.2 & \cellcolor[HTML]{F5F3FC}0.440 & 79.0\% & $+0.625$ & \cellcolor[HTML]{FFF0DC}\color[HTML]{1B2735}11.7 & \cellcolor[HTML]{FFF0DC}\color[HTML]{1B2735}1.6 & $7.3\times$ \\
deepseek-v4-pro & \cellcolor[HTML]{F5F3FC}0.555 & 66.5\% & $+0.247$ & \cellcolor[HTML]{FFF0DC}\color[HTML]{1B2735}28.0 & \cellcolor[HTML]{FFF0DC}\color[HTML]{1B2735}2.4 & $11.7\times$ \\
\rowcolor[HTML]{EAF1FC}
GLM-5.3 & \cellcolor[HTML]{F5F3FC}0.507 & 51.5\% & $+0.016$ & \cellcolor[HTML]{FFF0DC}\color[HTML]{1B2735}47.7 & \cellcolor[HTML]{FFF0DC}\color[HTML]{1B2735}7.4 & $6.4\times$ \\
deepseek-v4-flash & \cellcolor[HTML]{F5F3FC}0.527 & 48.3\% & $-0.092$ & \cellcolor[HTML]{FFF0DC}\color[HTML]{1B2735}56.3 & \cellcolor[HTML]{FFF0DC}\color[HTML]{1B2735}6.0 & $9.4\times$ \\
\bottomrule
\end{tabularx}
\end{table}
% END sections/empirical
% BEGIN sections/conclusion
\section{Conclusion}
\label{sec:conclusion}

We show that correctness conditioning changes the population used to estimate CoT faithfulness and quantify the resulting gap through $\Delta = \Cov(A,C)/P(C{=}1)$. Our protocol review separates this restriction from other selection rules used in prior work. On arithmetic DAGs, the primary GLM-5.3 run has a $+7.4$pp gap ($n=1000$, $p<0.0001$); operation-count variation accounts for 42.8\%, and the remainder is unattributed. Two of the four sub-ceiling deployments have individually significant gaps. These results establish selection bias in the measured setting. No shortcut use was detected in the neutral-framing runs, so these runs do not support attributing the observed bias to the collider pathway. Faithfulness evaluations should state their reporting population and use all traces whenever counterfactual compliance is observable for every instance.
% END sections/conclusion

% BEGIN sections/statements
\section*{Reproducibility Statement}

The procedural task generators use deterministic seed streams, and the symbolic oracle computes exact outputs. The main text and appendices specify the simulation parameters, estimands, prompts, and scoring conventions. Headline model calls use temperature $=0$ (greedy decoding). When a serving endpoint returned different responses at that setting, we retained the first completed response for the reported run. We conducted the headline experiments in August 2026 and the exploratory framing probe in September 2026. Each model--seed--instance is counted once.

\section*{Data and Code Availability}

The website source, project film, paper figures, PDF, and release checks are available at \url{https://github.com/CCS-CoTFE/CCS-CoTFE}. The project website is \url{https://ccs-cotfe.github.io/CCS-CoTFE/}. The paper source bundle contains the LaTeX files and figure assets needed to compile the manuscript.

The release contains the aggregate results reported in the tables and figures. Raw API logs, model completions, provider request identifiers, intermediate exports, the working literature-audit ledger, and credential-bound generators are not included. Exact row-level reanalysis requires a separate anonymized data release.

\section*{Ethics Statement}

We study the statistical properties of evaluation methods without human subjects, private data, or model training. All language model API calls use publicly available commercial endpoints under standard terms of service. The procedural tasks are synthetic and contain no sensitive content.
% END sections/statements

\appendix
% BEGIN sections/reproducibility
\section{Experimental Configuration}
\label{sec:expconfig}

Task families and experimental phases use different sample sizes, which we report separately. The implementation check uses small deterministic batches across three synthetic task families. The real-model study uses arithmetic DAGs with a primary run of $n=1000$ and additional sub-ceiling deployments of $n=300$, $n=200$, and $n=100$, together with two $100\%$-accuracy boundary controls. Each model--seed--instance key is counted once before aggregation.

The arithmetic DAG generator uses a depth range of $[2,4]$, a value range of $[1,20]$, and a planted-cue validity control of $q=0.7$. This control differs from the theoretical shortcut prevalence $p_s$ used in the implementation check.

The prompt template for the real-model runs is:

\begin{quote}
\ttfamily Solve this step by step. Show your work. \par\vspace{2pt}
\{task prompt\} \par\vspace{2pt}
Your final answer (just the number):
\end{quote}

The counterfactual condition replaces the intervened step line with the new value (e.g.\ \texttt{S3 = 12 (given)}) and keeps the rest of the task unchanged. The empirical runs use no system-level objective or persona prompt. Persona-modulated agents in simulation (Section~\ref{sec:simulation}) are stochastic compliance models with a fixed random seed: for each instance, the agent answers correctly with probability $\pi_{rs}$ and complies with intervention with probability $\eta_{rs}$, drawn from the closed-form cell probabilities. These agents check the mathematics and implementation; they do not model the behavior of a real language model.
% END sections/reproducibility
% BEGIN sections/appendix
\section{Implementation Consistency Check}
\label{sec:simulation}

We verify the numerical implementation by sampling agents from the derived cell probabilities. The simulator imposes the model assumptions, so this experiment checks the implementation and does not independently test the theory. Sweeping $(p_r, p_s) \in [0,1]^2$ gives a mean absolute error of 0.006 against Corollary~\ref{cor:simplified}. The implementation reproduces the monotone decrease in $p_s$ (Corollary~\ref{cor:ps_monotone}) and the peak at $p_r^* \approx 0.456$ (Figure~\ref{fig:bias_surface}, Appendix~\ref{sec:sim_families}).

The three task families differ in dependency structure and reliance level. Table~\ref{tab:simulation} compares them at a common theoretical shortcut prevalence $p_s = 0.7$. The gaps agree to within $0.6$pp (mean $+8.2$pp, range $+7.9$ to $+8.5$pp), while per-family reliance $p_r$ ranges from $0.33$ to $0.58$. This checks the closed-form implementation across integer arithmetic, Boolean logic, and graph reachability. The headline real-model evidence uses arithmetic tasks.

\begin{table}[htbp]
\caption{\textbf{Implementation consistency.} Three procedural task families evaluated under a common shortcut prevalence.}
\label{tab:simulation}
\centering
\fontsize{9}{10.8}\selectfont
\setlength{\tabcolsep}{4pt}
\renewcommand{\arraystretch}{1.18}
\arrayrulecolor{black!35}
\setlength{\heavyrulewidth}{0.65pt}
\setlength{\lightrulewidth}{0.35pt}
\begin{tabularx}{\linewidth}{@{}>{\raggedright\arraybackslash}Xccccc@{}}
\toprule
\rowcolor[HTML]{F1F6FE}
\textbf{\boldmath Family} & \textbf{\boldmath $n$} & \textbf{\boldmath $p_r$} & \textbf{\boldmath $\Fpop$} & \textbf{\boldmath $\Fsel$} & \textbf{\boldmath Gap (pp)} \\

\midrule
Arithmetic DAG & 30 & 0.33 & \cellcolor[HTML]{F5F3FC}0.333 & \cellcolor[HTML]{FFF6E9}0.417 & \cellcolor[HTML]{FFF0DC}\color[HTML]{1B2735}+8.3 \\
Boolean circuit & 50 & 0.58 & \cellcolor[HTML]{F5F3FC}0.580 & \cellcolor[HTML]{FFF6E9}0.659 & \cellcolor[HTML]{FFF0DC}\color[HTML]{1B2735}+7.9 \\
Graph reachability & 50 & 0.52 & \cellcolor[HTML]{F5F3FC}0.520 & \cellcolor[HTML]{FFF6E9}0.605 & \cellcolor[HTML]{FFF0DC}\color[HTML]{1B2735}+8.5 \\
\midrule
\rowcolor[HTML]{EAF1FC}
\textbf{Mean} & & & & & \cellcolor[HTML]{FFF0DC}\color[HTML]{1B2735}\textbf{+8.2} \\
\bottomrule
\end{tabularx}
\end{table}

Under the correctness parameters specified in Appendix~\ref{sec:reversal_sensitivity}, Model $\mathcal{A}$ ($p_r = 0.45$, $p_s = 0.90$) has $\Fpop = 0.450$ but $\Fsel = 0.504$, while Model $\mathcal{B}$ ($p_r = 0.70$, $p_s = 0.01$) has $\Fpop = 0.700$ but $\Fsel = 0.308$: conditioning reverses the ranking, with 40.6\% of the 39,999 full-grid candidates reversing at $\pi_{10}=0.001$. These conditions underlie the prospective analysis in Section~\ref{sec:collider_case}. This example requires an active shortcut branch, which we did not detect in our real-model runs under neutral framing.

\section{Simulation: Cross-Family Detail}
\label{sec:sim_families}

Table~\ref{tab:simulation} (Section~\ref{sec:simulation}) reports the per-family simulation results. The theoretical shortcut prevalence is $p_s = 0.7$ for every row; this is distinct from the planted-cue validity control $q$ in Section~\ref{sec:tasks}. The per-family CoT-reliance level $p_r$ differs and equals $\Fpop$ under strict reliance. Boolean circuit and graph tasks use $n=50$ for richer structural variation; arithmetic DAGs use $n=30$ due to simpler compositional structure. The comparison varies the dependency structure while retaining the intervention protocol and oracle design of Section~\ref{sec:method}.

\begin{figure}[htbp]
\centering
\includegraphics[width=1.00\textwidth]{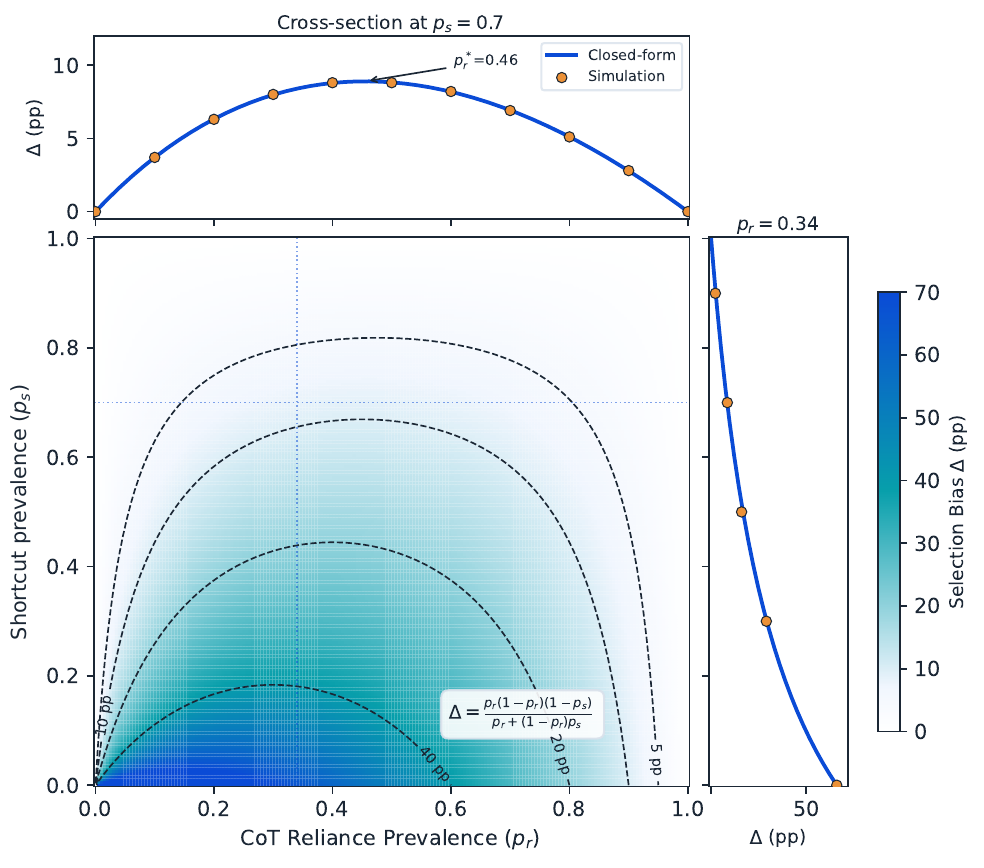}
\caption{\textbf{Selection bias surface.} Closed-form predictions with simulated points overlaid on the marginal slices.}
\label{fig:bias_surface}
\end{figure}

\section{Estimand: Within-Stratum vs.\ Marginal}
\label{sec:estimand}

We distinguish stratum-specific and marginal estimands. The theory's Propositions~\ref{prop:collider_bias}, \ref{prop:general_gap} and \ref{prop:reversal} are stated at fixed difficulty $D = d$ and identify the \emph{stratum-specific} $\Delta^{(d)}$. Conditioning on $d$ permits the use of $R \perp S \mid D$ and the $T$-marginalization. It also removes the difficulty-confounding pathway from $\Delta^{(d)}$, since $D$ no longer varies. Proposition~\ref{prop:difficulty} is \emph{marginal} in $D$ and applies to real models, where latent $R$ and $S$ cannot be conditioned on.

Table~\ref{tab:empirical} reports the \emph{marginal} $\Delta = \Fsel - \Fpop$. Its within-stratum component averages the $\Delta^{(d)}$ under the difficulty distribution among selected instances; a separate composition term accounts for the change in that distribution caused by correctness conditioning. The marginal gap is therefore not identified by the within-stratum result alone. We interpret the real-model gaps as consistent with difficulty confounding, without applying the shortcut-mixture special case to them.

The within-stratum analysis (Appendix~\ref{sec:depth_analysis}) decomposes the marginal gap by operation count. Gaps persist within strata, beyond the between-stratum variation. This decomposition leaves the collider contribution unidentified.

\paragraph{Magnitude reconciliation.} The shortcut-mixture special case is inconsistent with the observed deployments in two ways. First, setting $p_s = 0$ to match the measured $0\%$ shortcut-use rate reduces Corollary~\ref{cor:simplified} to $\Delta = 1 - P(C{=}1)$. This expression is monotone, has no interior maximum, and predicts ${\sim}48$pp at GLM-5.3's $52\%$ accuracy, compared with $+7.4$pp observed. Second, the implementation-check parameter $p_s = 0.7$ implies an accuracy floor of $0.70$, above the measured $48$--$52\%$ accuracy of these deployments.

Solving each model's own $\Fpop$ and accuracy for an implied $p_s$ gives $+0.625$, $+0.247$, $+0.016$, and $-0.092$ (glm-5.2, v4-pro, GLM-5.3, flash). The negative value is not a valid probability, and these inferred values do not establish the measured $0\%$ shortcut use. Table~\ref{tab:overprediction} (Section~\ref{sec:empirical}) reports the inputs needed to reproduce the predicted/observed ratios; Section~\ref{sec:empirical} discusses their disagreement.

We therefore use the mixture special cases for simulation and idealised design analysis, and analyse real-model gaps through the observed operation-count covariate.

The accuracy-axis peak occurs at $P^{*} = \sqrt{p_s}$, so with the implementation-check value $p_s = 0.7$ it lies at $83.7\%$: the four sub-ceiling deployments ($48$--$79\%$) lie below it, while the two ceiling controls ($100\%$) lie above. No tested deployment is near the peak. The ceiling controls cannot localise it because $P(C{=}1) = 1$ makes $\Delta = 0$ an algebraic identity. Probing a peak near $50\%$ accuracy would require $p_s \approx 0.25$, which is a different theoretical regime from the one used in the implementation check. The accuracy band in Figure~\ref{fig:goldilocks} is therefore descriptive, not a prediction.

\section{Proof Details}
\label{sec:proofs}

\subsection{Corollary~\ref{cor:accuracy_hump}: Hump on the Accuracy Axis}

With $r \coloneqq \sqrt{p_s}$ and strict reliance, Corollary~\ref{cor:hump} locates the hump at $p_r^* = r/(1+r)$ in CoT-reliance space. Substituting into Corollary~\ref{cor:simplified}: $p_r^*(1-p_r^*) = r/(1+r)^2$, and $p_r^* + (1-p_r^*)p_s = r/(1+r) + r^2(1+r)^{-1} = r$. Hence
\begin{equation}
\Delta(P^*) = \frac{r(1-p_s)}{(1+r)^2\,r} = \frac{1-r^2}{(1+r)^2} = \frac{1-r}{1+r}, \qquad P(p_r^*) = p_r^* + (1-p_r^*)p_s = r = \sqrt{p_s}.
\end{equation}
Since $P$ is affine and strictly increasing in $p_r$, the unique interior maximum in $p_r$ maps to the unique interior maximum in $P$. Attainable accuracy spans $[p_s, 1]$; the peak $P^* = \sqrt{p_s}$ lies in that interval for all $p_s \in (0,1)$.

Proofs of Propositions~\ref{prop:collider_bias} and~\ref{prop:difficulty} are given in the main text. The collider pathway derivations (Propositions~\ref{prop:general_gap} and~\ref{prop:reversal}, Corollaries~\ref{cor:simplified}--\ref{cor:accuracy_hump}) are in Appendix~\ref{sec:collider_full}. This appendix contains additional derivation details.

\section{Collider Pathway: Full Derivations}
\label{sec:collider_full}

This section derives the collider results summarized in Section~\ref{sec:collider_case} for a population with an active shortcut branch ($S > 0$).

The strict-reliance special case sets $\eta_{00} = \eta_{01} = 0$ and $\eta_{10} = \eta_{11} = 1$: the model's answer changes under intervention if and only if it genuinely relies on the CoT. Then $\Fpop = p_r$. Define the stratum-specific correctness rates
\begin{align}
g_1(p_s) &\coloneqq P(C{=}1 \mid R{=}1) = (1{-}p_s)\,\pi_{10} + p_s\,\pi_{11}, \label{eq:g1} \\
g_0(p_s) &\coloneqq P(C{=}1 \mid R{=}0) = (1{-}p_s)\,\pi_{00} + p_s\,\pi_{01}. \label{eq:g0}
\end{align}

\begin{proposition}[Strict-Reliance Gap]
\label{prop:general_gap}
Under the strict-reliance assumption, $\Fsel = P(R{=}1 \mid C{=}1)$, and the gap is:
\begin{equation}
\Delta = \frac{p_r(1 - p_r)(g_1 - g_0)}{p_r \, g_1 + (1 - p_r) \, g_0}.
\label{eq:delta_general}
\end{equation}
The sign of $\Delta$ is determined by $g_1 - g_0$: correctness conditioning overestimates faithfulness when CoT reliance makes correctness more likely ($g_1 > g_0$), and underestimates it otherwise.
\end{proposition}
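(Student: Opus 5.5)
The plan is to work inside a fixed stratum $D=d$, using $R \perp S \mid D=d$ and the premise $A \perp C \mid \{R,S,I,D\}$ of Proposition~\ref{prop:collider_bias}. First, the identity $\Fsel = P(R{=}1 \mid C{=}1)$ follows directly from the strict-reliance specification. With $\eta_{10}=\eta_{11}=1$ and $\eta_{00}=\eta_{01}=0$, the cell propensity $\eta_{RS}$ equals the indicator $\1[R{=}1]$. Under the conditional-independence premise, the Bayes expansion in the proof of Proposition~\ref{prop:collider_bias} gives
\[
\Fsel = \frac{E[\eta_{RS}\pi_{RS}]}{E[\pi_{RS}]} = \frac{E[\1[R{=}1]\,\pi_{RS}]}{P(C{=}1)} = \frac{P(R{=}1, C{=}1)}{P(C{=}1)} = P(R{=}1\mid C{=}1).
\]
In the same way, $\Fpop = E[\eta_{RS}] = p_r$.

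Next, compute the two ingredients of the covariance formula \eqref{eq:delta_cov}. Marginalizing over $S$ with $R\perp S$ gives $E[\pi_{RS}\mid R{=}1] = g_1$ and $E[\pi_{RS}\mid R{=}0] = g_0$, as defined in \eqref{eq:g1}--\eqref{eq:g0}. Hence
\[
P(C{=}1) = p_r g_1 + (1-p_r) g_0 .
\]
For the numerator, I would use that $\eta_{RS}=\1[R{=}1]$ is a Bernoulli variable:
\[
\Cov(\1[R{=}1], \pi_{RS}) = E[\1[R{=}1]\pi_{RS}] - p_r P(C{=}1) = p_r g_1 - p_r\bigl(p_r g_1 + (1-p_r)g_0\bigr) = p_r(1-p_r)(g_1-g_0).
\]
Dividing by $P(C{=}1)$ yields \eqref{eq:delta_general}. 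As a cross-check, compute $P(R{=}1\mid C{=}1) - p_r = p_r g_1/(p_r g_1+(1-p_r)g_0) - p_r$ directly, put it over a common denominator, and confirm it matches.

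For the sign claim, the factor $p_r(1-p_r)$ is nonnegative, and the denominator $P(C{=}1)$ is positive whenever $\Fsel$ is defined. So $\operatorname{sign}\Delta = \operatorname{sign}(g_1-g_0)$ whenever $p_r\in(0,1)$. When $p_r\in\{0,1\}$ the gap vanishes. The calculation itself is routine. The main point needing care is the first step: the Bayes expansion needs the premise $A\perp C\mid\{R,S,I,D\}$, since $T$ otherwise links $A$ and $C$. I would state explicitly that this premise is inherited from Proposition~\ref{prop:collider_bias}. The alternative is to argue that under strict reliance $A$ is a deterministic function of $R$, so conditioning on $C$ cannot add information beyond $R$; this makes the independence automatic and is the cleaner route to the stated identity.
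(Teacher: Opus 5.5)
Your proof is correct, and it differs from the paper's only in how the algebra is organized. The paper's proof is the computation you list as a cross-check: it reads off $\Fsel = P(R{=}1\mid C{=}1) = p_r g_1/[p_r g_1 + (1-p_r)g_0]$ by Bayes' rule and subtracts $\Fpop = p_r$.

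Your main route instead specializes the covariance formula of Proposition~\ref{prop:collider_bias} with $\eta_{RS} = \1[R{=}1]$. The numerator then appears directly as $\Cov(\1[R{=}1],\pi_{RS}) = p_r(1-p_r)(g_1-g_0)$, the Bernoulli variance of $R$ times the accuracy difference. This makes the factorization and the sign claim visible at once, rather than leaving them to emerge from simplification.

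The price is that Proposition~\ref{prop:collider_bias} carries the premise $A\perp C\mid\{R,S,I,D\}$, which is not a hypothesis of the present statement. Your closing remark should therefore be the actual justification rather than an alternative. Since every $\eta_{rs}\in\{0,1\}$, $A=\1[R{=}1]$ almost surely on $\{I{=}1, D{=}d\}$. A variable that is almost surely constant given $(R,S,I{=}1,D{=}d)$ is conditionally independent of $C$. Then $\Fsel = P(R{=}1\mid C{=}1)$ follows because $I$ is independent of $(R,C)$.

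You can also skip Proposition~\ref{prop:collider_bias} entirely and apply the assumption-free identity in \eqref{eq:bridge} with $\Cov(A,C)=\Cov(\1[R{=}1],C)$. This yields the same numerator with no independence premise, like the paper's direct Bayes argument.

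Your sign discussion notes that $\Delta=0$ when $p_r\in\{0,1\}$, and implicitly when $g_1=g_0$. That is slightly sharper than the statement's ``underestimates it otherwise.''
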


\begin{proof}
Under strict reliance, $\Fsel = P(R{=}1 \mid C{=}1) = p_r g_1 / [p_r g_1 + (1{-}p_r) g_0]$ by Bayes' rule. Subtracting $\Fpop = p_r$ and simplifying yields~\eqref{eq:delta_general}.
\end{proof}

\begin{corollary}[Simplified Gap Under Perfect Faithful Reasoning]
\label{cor:simplified}
When faithful reasoning is always correct ($\pi_{10} = \pi_{11} = 1$), shortcut-only correctness is certain ($\pi_{01}=1$), and the neither-route cell is always wrong ($\pi_{00} = 0$), we have $g_1 = 1$ and $g_0 = p_s$, giving:
\begin{equation}
\Delta = \frac{p_r(1 - p_r)(1 - p_s)}{p_r + (1 - p_r) \, p_s}.
\label{eq:delta_simplified}
\end{equation}
\end{corollary}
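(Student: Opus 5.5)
This is a direct specialization of Proposition~\ref{prop:general_gap}. The plan is to compute the two stratum-specific correctness rates under the stated cell values, substitute them into \eqref{eq:delta_general}, and simplify. The strict-reliance assumption ($\eta_{00}=\eta_{01}=0$, $\eta_{10}=\eta_{11}=1$) is inherited unchanged, so $\Fpop = p_r$ and $\Fsel = P(R{=}1\mid C{=}1)$ carry over. Only the correctness propensities $\pi_{rs}$ are being specialized.

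\textbf{Step 1: the correctness rates.} From \eqref{eq:g1} with $\pi_{10}=\pi_{11}=1$,
\[
g_1 = (1-p_s)\cdot 1 + p_s\cdot 1 = 1.
\]
From \eqref{eq:g0} with $\pi_{00}=0$ and $\pi_{01}=1$,
\[
g_0 = (1-p_s)\cdot 0 + p_s\cdot 1 = p_s.
\]
Both computations use the independence $R\perp S\mid D=d$, which was already built into the definitions of $g_0$ and $g_1$.

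\textbf{Step 2: substitution.} Inserting these into \eqref{eq:delta_general}, the numerator becomes $p_r(1-p_r)(1-p_s)$. The denominator becomes $p_r\cdot 1 + (1-p_r)p_s$. This is exactly \eqref{eq:delta_simplified}.

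As a sanity check, the denominator equals $P(C{=}1\mid D=d)$, the selection probability. I would also confirm directly that $\Fsel = p_r/(p_r+(1-p_r)p_s)$, and that subtracting $p_r$ gives the same expression.

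\textbf{Main point of care.} There is no real obstacle here. The one thing to check is that the denominator is nonzero, so that $\Fsel$ is well defined. We have $p_r + (1-p_r)p_s > 0$ unless $p_r = 0$ and $p_s = 0$ hold simultaneously. In that degenerate case $P(C{=}1)=0$ and the gap is undefined, so this case should be excluded (or the gap set to $0$ by convention).

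I would also note the consequence for the sign. Since $g_1 - g_0 = 1 - p_s \ge 0$, the sign condition in Proposition~\ref{prop:general_gap} gives $\Delta \ge 0$, with equality only at the boundaries $p_r \in \{0,1\}$ or $p_s = 1$.
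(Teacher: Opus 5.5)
Your proposal is correct and follows the paper's own argument: it evaluates $g_1 = 1$ and $g_0 = p_s$ from the definitions of the stratum-specific correctness rates under the stated cell values, then substitutes them into the strict-reliance gap \eqref{eq:delta_general} of Proposition~\ref{prop:general_gap}. Your additional remarks are also correct and go slightly beyond the paper, which leaves them implicit: the degenerate case $p_r = p_s = 0$ (where $P(C{=}1)=0$) must be excluded, and $\Delta \ge 0$ with equality only at $p_r \in \{0,1\}$ or $p_s = 1$.
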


This simplified form is a theoretical mixture for an active shortcut branch. The planted-cue generator in Section~\ref{sec:tasks} uses the separate validity control $q$ and does not estimate $p_s$ in the real-model runs. For multiple-choice tasks with $k$ options, $\pi_{00} = 1/k > 0$ and the general formula~\eqref{eq:delta_general} applies; the simplified form remains a close approximation when $1/k \ll p_s$.

\begin{corollary}[Monotone Decrease in Shortcut Prevalence]
\label{cor:ps_monotone}
For fixed $p_r \in (0, 1)$, $\partial \Delta / \partial p_s < 0$: the gap~\eqref{eq:delta_simplified} is strictly decreasing in shortcut prevalence.
\end{corollary}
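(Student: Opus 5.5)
We differentiate the closed form \eqref{eq:delta_simplified} of Corollary~\ref{cor:simplified} directly with respect to $p_s$, holding $p_r \in (0,1)$ fixed. Write
\begin{equation*}
\Delta(p_s) = p_r(1-p_r)\,\frac{N(p_s)}{M(p_s)}, \qquad N(p_s) = 1 - p_s, \qquad M(p_s) = p_r + (1-p_r)p_s .
\end{equation*}
The prefactor $p_r(1-p_r)$ is strictly positive and does not depend on $p_s$. So it suffices to show that $N/M$ is strictly decreasing on $p_s \in [0,1]$.

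\textbf{Step 1 (domain check).} The denominator satisfies $M(p_s) \ge p_r > 0$ for all $p_s \in [0,1]$, because $(1-p_r)p_s \ge 0$. Hence $\Delta$ is well defined and differentiable in $p_s$ on the whole interval.

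\textbf{Step 2 (quotient rule).} We have $N' = -1$ and $M' = 1-p_r$, so
\begin{equation*}
\frac{d}{dp_s}\frac{N}{M} = \frac{-M - (1-p_s)(1-p_r)}{M^2}.
\end{equation*}
Expanding the numerator gives
\begin{equation*}
-\bigl[p_r + (1-p_r)p_s + (1-p_r) - (1-p_r)p_s\bigr] = -\bigl[p_r + 1 - p_r\bigr] = -1 .
\end{equation*}
Therefore
\begin{equation*}
\frac{\partial \Delta}{\partial p_s} = -\frac{p_r(1-p_r)}{\bigl(p_r + (1-p_r)p_s\bigr)^2}.
\end{equation*}

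\textbf{Step 3 (sign).} For $p_r \in (0,1)$ the numerator $p_r(1-p_r)$ is strictly positive and the squared denominator is strictly positive by Step~1. So the derivative is strictly negative, and $\Delta$ is strictly decreasing in $p_s$.

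The only place care is needed is Step~1. At $p_s = 0$ the denominator reduces to $p_r$, so it must be checked that this is still nonzero; the hypothesis $p_r > 0$ guarantees that. At the endpoints $p_r \in \{0,1\}$ the gap is identically zero, which is why the statement excludes them. The cancellation in the numerator in Step~2 is the one nontrivial simplification.
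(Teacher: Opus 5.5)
Your proof is correct and takes the same route as the paper: direct quotient-rule differentiation of the closed form. The paper writes $a=p_r$, $b=1-p_r$ and obtains $-ab(a+b)/(a+bp_s)^2$, which is your $-p_r(1-p_r)/(p_r+(1-p_r)p_s)^2$ once $a+b=1$ is used; your explicit bound $M(p_s)\ge p_r>0$ is a small check the paper leaves implicit.
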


\begin{proof}
Write $\Delta(p_s) = a\,b\,(1 - p_s) / (a + b\,p_s)$ with $a = p_r > 0$ and $b = 1{-}p_r > 0$. Differentiating:
$\partial \Delta / \partial p_s = -a\,b\,(a + b) / (a + b\,p_s)^2 < 0$.
\end{proof}

\begin{corollary}[Hump-Shaped Prevalence Response]
\label{cor:hump}
For fixed $p_s \in (0, 1)$, $\Delta(p_r)$ from~\eqref{eq:delta_simplified} is zero at $p_r = 0$ and $p_r = 1$, with a unique interior maximum at $p_r^* = \sqrt{p_s}/(1 + \sqrt{p_s})$.
\end{corollary}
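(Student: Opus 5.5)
We work directly with the closed form in Corollary~\ref{cor:simplified}. Fix $p_s \in (0,1)$, write $x \coloneqq p_r \in [0,1]$, and set
\begin{equation}
\Delta(x) = \frac{(1-p_s)\,x(1-x)}{x + (1-x)p_s} = (1-p_s)\,\frac{N(x)}{M(x)}, \qquad N(x) = x - x^2,\quad M(x) = p_s + (1-p_s)x.
\end{equation}
The denominator satisfies $M(x) \ge p_s > 0$ on $[0,1]$, so $\Delta$ is smooth there. At either endpoint $x(1-x)=0$, hence $\Delta(0)=\Delta(1)=0$. On $(0,1)$ every factor is strictly positive, so $\Delta>0$ in the interior. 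Consequently any maximum is interior, and it must sit at a critical point.

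Next we differentiate with the quotient rule. The sign of $\Delta'$ matches that of $Q(x) \coloneqq N'(x)M(x) - N(x)M'(x)$, where $N' = 1-2x$ and $M' = 1-p_s$. Expanding gives
\begin{equation}
Q(x) = (1-2x)\bigl(p_s + (1-p_s)x\bigr) - (1-p_s)(x - x^2) = p_s - 2p_s x - (1-p_s)x^2,
\end{equation}
a downward-opening quadratic. We have $Q(0)=p_s>0$ and $Q(1) = p_s - 2p_s - 1 + p_s = -1 < 0$. Because $Q$ is concave and changes sign on $[0,1]$, it has exactly one root in $(0,1)$. $\Delta$ increases before that root and decreases after it, so the root is the unique interior maximum.

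Finally we locate the root. The equation $(1-p_s)x^2 + 2p_s x - p_s = 0$ gives
\begin{equation}
x = \frac{-p_s + \sqrt{p_s^2 + p_s(1-p_s)}}{1-p_s} = \frac{\sqrt{p_s} - p_s}{1-p_s} = \frac{\sqrt{p_s}(1-\sqrt{p_s})}{(1-\sqrt{p_s})(1+\sqrt{p_s})} = \frac{\sqrt{p_s}}{1+\sqrt{p_s}},
\end{equation}
where we take the positive root. This is $p_r^*$. As a check, substituting it back gives the value $(1-\sqrt{p_s})/(1+\sqrt{p_s})$ recorded in the proof of Corollary~\ref{cor:accuracy_hump}. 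The only step that needs care is the algebraic simplification $p_s^2 + p_s(1-p_s) = p_s$ together with the factorization $1-p_s = (1-\sqrt{p_s})(1+\sqrt{p_s})$. Uniqueness follows from the sign analysis of $Q$ rather than from solving the quadratic.
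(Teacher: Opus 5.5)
Your proof is correct and follows the paper's route: differentiate, reduce the first-order condition to $(1-p_s)p_r^2 + 2p_s p_r - p_s = 0$, and use $\Delta(0)=\Delta(1)=0$ to identify the critical point as the maximum. The paper writes that condition as $p_r^2 = p_s(1-p_r)^2$, which gives $p_r/(1-p_r)=\sqrt{p_s}$ without the quadratic formula, while your concavity-based sign analysis of $Q$ makes the uniqueness and the increase-then-decrease shape more explicit than the paper's brief appeal to the boundary values.
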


\begin{proof}
Setting $d\Delta/dp_r = 0$: the first-order condition yields $p_r^2 = p_s(1{-}p_r)^2$, hence $p_r/(1{-}p_r) = \sqrt{p_s}$, giving $p_r^* = \sqrt{p_s}/(1 + \sqrt{p_s})$. The boundary values $\Delta(0) = \Delta(1) = 0$ confirm this is a maximum.
\end{proof}

\begin{corollary}[Hump on the Accuracy Axis]
\label{cor:accuracy_hump}
Under the assumptions of Corollary~\ref{cor:simplified}, with $p_s\in(0,1)$ and $P \coloneqq P(C{=}1)$, the bias has a unique interior maximum in accuracy at $P^* = \sqrt{p_s}$, where $\Delta(P^*) = (1-\sqrt{p_s})/(1+\sqrt{p_s})$ (proof in Appendix~\ref{sec:proofs}). Accuracy below $P^*$ lies on the rising limb, above it on the falling limb.
\end{corollary}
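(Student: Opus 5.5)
The plan is a reparametrization. Corollary~\ref{cor:hump} already locates the maximum of $\Delta$ in the CoT-reliance coordinate $p_r$; what remains is to carry that maximum over to the accuracy coordinate. With $p_s\in(0,1)$ fixed, the assumptions of Corollary~\ref{cor:simplified} give $g_1=1$ and $g_0=p_s$. Hence the accuracy is
\begin{equation*}
P = p_r + (1-p_r)\,p_s = p_s + (1-p_s)\,p_r ,
\end{equation*}
which is affine in $p_r$ with slope $1-p_s>0$. The map $p_r\mapsto P$ is therefore a strictly increasing bijection from $[0,1]$ onto $[p_s,1]$. Its inverse is $p_r=(P-p_s)/(1-p_s)$, so $\Delta$, viewed as a function of $P$, is just the composition of \eqref{eq:delta_simplified} with this inverse.

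First, I will invoke Corollary~\ref{cor:hump}. It states that $\Delta(p_r)$ vanishes at both endpoints and has a unique interior maximum at $p_r^*=r/(1+r)$, where $r\coloneqq\sqrt{p_s}$. Composition with a strictly increasing bijection preserves where the function rises, where it falls, and where it peaks. So $\Delta(P)$ vanishes at $P=p_s$ and $P=1$ and has a unique interior maximum at $P^*=P(p_r^*)$. The rising and falling limbs below and above $P^*$ carry over from the $p_r$-axis in the same way. This transfer also needs the monotonicity in $p_r$ on each side of $p_r^*$. The quickest way to get it is to note that the first-order condition $p_r^2=p_s(1-p_r)^2$ from the proof of Corollary~\ref{cor:hump} has exactly one root in $(0,1)$, and $\Delta$ is smooth there.

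Second, I will compute $P^*$ and $\Delta(P^*)$ by direct substitution. From $p_r^*=r/(1+r)$ we get $1-p_r^*=1/(1+r)$. The denominator of \eqref{eq:delta_simplified} is then $p_r^*+(1-p_r^*)r^2=(r+r^2)/(1+r)=r$, so $P^*=r=\sqrt{p_s}$. The numerator is $p_r^*(1-p_r^*)(1-r^2)=r(1-r^2)/(1+r)^2$. Dividing by $r$ and cancelling one factor of $1+r$ gives $\Delta(P^*)=(1-r)/(1+r)$. Finally, I will check that $P^*=\sqrt{p_s}$ lies strictly inside $(p_s,1)$, which holds because $p_s<\sqrt{p_s}<1$ for $p_s\in(0,1)$.

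I do not expect a real obstacle. The one step needing care is the claim that the interior maximum in $p_r$ is unique and that the limbs are monotone, since Corollary~\ref{cor:hump} asserts these properties rather than proving them in full. To settle this I will write out the derivative's numerator. It is proportional to $(1-p_s)\bigl[p_s(1-p_r)^2-p_r^2\bigr]$, which has the same sign as $\sqrt{p_s}(1-p_r)-p_r$. This sign is positive for $p_r<p_r^*$ and negative for $p_r>p_r^*$, which establishes both the uniqueness and the limb structure.
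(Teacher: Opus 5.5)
Your proposal is correct and follows the paper's proof essentially step for step: you reparametrize via the strictly increasing affine map $P = p_s + (1-p_s)p_r$, transfer the maximum from Corollary~\ref{cor:hump}, and substitute $p_r^* = r/(1+r)$ to obtain $P^* = r$ and $\Delta(P^*) = (1-r)/(1+r)$. Your sign analysis of the derivative numerator $(1-p_s)\bigl[p_s(1-p_r)^2 - p_r^2\bigr]$ goes beyond the paper, which takes uniqueness and the rising and falling limbs from Corollary~\ref{cor:hump} without a full argument; your check supplies that justification.
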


\begin{proposition}[Ranking Reversal]
\label{prop:reversal}
Consider two models $\mathcal{A}$ and $\mathcal{B}$ in the strict-reliance case with shared correctness parameters $(\pi_{rs})$ and positive selection probabilities $q_m=p_{r,m}g_{1,m}+(1-p_{r,m})g_{0,m}>0$ for $m\in\{A,B\}$. If $p_{r,A} < p_{r,B}$, a ranking reversal $\Fsel(\mathcal{A}) > \Fsel(\mathcal{B})$ occurs if and only if
\begin{equation}
p_{r,A}(1-p_{r,B})g_{1,A}g_{0,B}
>p_{r,B}(1-p_{r,A})g_{1,B}g_{0,A}.
\label{eq:reversal_condition}
\end{equation}
For $0<p_{r,A}<p_{r,B}<1$ and positive $g_{0,m},g_{1,m}$, this is equivalent to $L_A/L_B>\mathrm{odds}(p_{r,B})/\mathrm{odds}(p_{r,A})$, where $L_m=g_{1,m}/g_{0,m}$ and $\mathrm{odds}(p)=p/(1-p)$. The cross-product form also covers zero cell probabilities whenever $q_A,q_B>0$.
\end{proposition}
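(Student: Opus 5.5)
We start from the strict-reliance expression for the selected estimate given in Proposition~\ref{prop:general_gap}. For each model $m\in\{A,B\}$ we write
\[
\Fsel(m)=\frac{p_{r,m}g_{1,m}}{q_m}, \qquad q_m=p_{r,m}g_{1,m}+(1-p_{r,m})g_{0,m}.
\]
This formula is well defined because $q_m>0$ is assumed. Since the correctness parameters $(\pi_{rs})$ are shared, $g_{1,m}$ and $g_{0,m}$ differ between models only through $p_{s,m}$ in \eqref{eq:g1}--\eqref{eq:g0}. The proof never needs their explicit form; we treat $g_{1,m},g_{0,m}\ge 0$ as given numbers. Note that $\Fpop(m)=p_{r,m}$, so the hypothesis $p_{r,A}<p_{r,B}$ says that $\mathcal{B}$ ranks higher in the population. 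A reversal therefore means exactly $\Fsel(\mathcal{A})>\Fsel(\mathcal{B})$.

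The key step is to clear denominators. Because $q_A,q_B>0$, the inequality $\Fsel(\mathcal{A})>\Fsel(\mathcal{B})$ holds if and only if $p_{r,A}g_{1,A}\,q_B>p_{r,B}g_{1,B}\,q_A$. We expand both sides. Each contains the term $p_{r,A}p_{r,B}g_{1,A}g_{1,B}$, which cancels. What remains is
\[
p_{r,A}(1-p_{r,B})g_{1,A}g_{0,B}>p_{r,B}(1-p_{r,A})g_{1,B}g_{0,A},
\]
which is \eqref{eq:reversal_condition}. Only positivity of $q_A$ and $q_B$ was used, and zeros among the $g$'s or $p_r$'s cause no trouble. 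This gives the final claim that the cross-product form also covers zero cell probabilities.

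For the odds form we assume $0<p_{r,A}<p_{r,B}<1$ and all $g_{0,m},g_{1,m}>0$. Then both sides of \eqref{eq:reversal_condition} are strictly positive. We divide both sides by the positive quantity $(1-p_{r,A})(1-p_{r,B})\,g_{1,B}\,g_{0,A}$ and then by $\mathrm{odds}(p_{r,A})$. Rearranging gives $L_A/L_B>\mathrm{odds}(p_{r,B})/\mathrm{odds}(p_{r,A})$. Every step is division by a strictly positive number, so the equivalence holds in both directions.

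We expect the only delicate point to be bookkeeping rather than substance. First, the cancellation of the $p_{r,A}p_{r,B}g_{1,A}g_{1,B}$ term must be checked carefully. Second, the positivity hypotheses must be used exactly where they are needed: $q_m>0$ for cross-multiplication, and strict interior $p_r$ values together with positive $g$'s for division in the odds form. We also note that the hypothesis $p_{r,A}<p_{r,B}$ is used only to interpret the result as a reversal and to make the odds ratio on the right exceed $1$. The algebraic equivalence itself does not depend on it.
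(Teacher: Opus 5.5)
Your proposal is correct and follows the paper's derivation step for step. You write $\Fsel(m)=p_{r,m}g_{1,m}/q_m$, cross-multiply using only $q_A,q_B>0$, cancel the common $p_{r,A}p_{r,B}g_{1,A}g_{1,B}$ term, and divide by a positive product, where your two-stage division composes to exactly the paper's single divisor $p_{r,A}(1-p_{r,B})g_{0,A}g_{1,B}$.
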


When the less CoT-reliant model has greater shortcut prevalence, reversal requires the complementarity condition $\pi_{11} \pi_{00} > \pi_{10} \pi_{01}$, plausible for hard tasks where reliance without shortcut support often produces invalid traces ($\pi_{10} \approx 0$) but reliance plus shortcut support yields correct answers ($\pi_{11} \approx 1$). This requirement concerns $p_{s,A}>p_{s,B}$; when $p_{s,A}<p_{s,B}$, the opposite sign can permit reversal under Proposition~\ref{prop:reversal}.

Under the special case in Corollary~\ref{cor:simplified}, the bias is largest at intermediate CoT reliance ($p_r^* = \sqrt{p_s}/(1+\sqrt{p_s})$) and, on the accuracy axis, at $P^* = \sqrt{p_s}$. A peak near 50\% accuracy requires $p_s \approx 0.25$. At $p_s = 0$ this special case reduces to $\Delta = 1 - P$, which is monotone and has no interior maximum. Its interior peak therefore requires a non-degenerate shortcut branch; testing it requires deployments in that regime.

\subsection{Ranking Reversal Threshold Derivation}

For two models $\mathcal{A}$ and $\mathcal{B}$ with $p_{r,A} < p_{r,B}$ and shared correctness parameters, the reversal condition $\Fsel(\mathcal{A}) > \Fsel(\mathcal{B})$ expands to:
\begin{equation}
\frac{p_{r,A} \, g_{1,A}}{p_{r,A} \, g_{1,A} + (1{-}p_{r,A}) \, g_{0,A}} > \frac{p_{r,B} \, g_{1,B}}{p_{r,B} \, g_{1,B} + (1{-}p_{r,B}) \, g_{0,B}},
\end{equation}
Multiplying by the positive denominators and cancelling the common term gives~\eqref{eq:reversal_condition}. Dividing by positive $p_{r,A}(1-p_{r,B})g_{0,A}g_{1,B}$ gives the likelihood-ratio form under the additional positivity conditions in Proposition~\ref{prop:reversal}.

\section{Measurement Validation}
\label{sec:measurement}

The faithful baseline ($n = 50$) gives $\Fpop = \Fsel = 1.000$, gap $= 0.000$, and shortcut rate $= 0\%$. In this check, the oracle introduces no false non-compliance and the interventions correctly modify downstream values.

\section{Protocol Review Details}
\label{sec:audit_full}

Table~\ref{tab:audit_full} records the procedure and reporting population found in each inspected source. Years follow the bibliography; the final column links to the pinned arXiv version and gives its section, equation, or table. A row describes the named analysis, not every experiment in that paper. In particular, an accuracy endpoint, a process-validity judgement, and intervention compliance are different measurements.

\begin{table}[!htbp]
\caption{\textbf{Reporting populations in prior work.} Protocol-specific evidence from 18 papers.}
\label{tab:audit_full}
\centering
\fontsize{8.5}{9.7}\selectfont
\setlength{\tabcolsep}{3pt}
\renewcommand{\arraystretch}{1.15}
\arrayrulecolor{black!35}
\setlength{\heavyrulewidth}{0.65pt}
\setlength{\lightrulewidth}{0.35pt}
\begin{tabularx}{\linewidth}{@{}>{\raggedright\arraybackslash}p{0.18\linewidth}c>{\raggedright\arraybackslash}p{0.20\linewidth}>{\raggedright\arraybackslash}X>{\raggedright\arraybackslash}p{0.12\linewidth}@{}}
\toprule
\rowcolor[HTML]{F1F6FE}
\textbf{Paper} & \textbf{Year} & \textbf{Procedure} & \textbf{Reporting population} & \textbf{Source} \\
\midrule
\rowcolor[HTML]{F7F9FC}
\citeauthor{al2023measuring} & 2023 & Trace truncation, mistakes, paraphrase & Collected traces; length-weighted aggregation & \href{https://arxiv.org/abs/2307.13702v1}{v1}; \S2.1--2.6 \\
\citeauthor{al2023language} & 2023 & Biased input cues & Main input-bias tests; correctness-selected qualitative examples & \href{https://arxiv.org/abs/2305.04388v2}{v2}; \S3.1; Tab.~7 \\
\rowcolor[HTML]{F7F9FC}
\citeauthor{al2023faithful} & 2023 & Deterministic solver & Test-set accuracy; correct outputs for human plausibility & \href{https://arxiv.org/abs/2301.13379v3}{v3}; \S4.2, 6.3 \\
\citeauthor{al2023question} & 2023 & Truncation and corruption & All reasoning samples; gold-label subsets for bias tests & \href{https://arxiv.org/abs/2307.11768v2}{v2}; \S3.2--3.3 \\
\rowcolor[HTML]{F7F9FC}
\citeauthor{al2023knowledge} & 2023 & Retrieval and trace construction & Test-set task success; correct-answer training pool & \href{https://arxiv.org/abs/2308.13259v2}{v2}; Alg.~1; Tab.~1 \\
\citeauthor{al2024chain} & 2024 & With/without-CoT agreement & Dataset average; no answer-correctness screen in metric & \href{https://arxiv.org/abs/2402.14897v3}{v3}; \S3, Eq.~1 \\
\rowcolor[HTML]{F7F9FC}
\citeauthor{al2024making} & 2024 & Counterfactual mediation & Mean test-set accuracy differences & \href{https://arxiv.org/abs/2402.13950v4}{v4}; \S4; Tab.~13 \\
\citeauthor{al2024hardness} & 2024 & Early-answer consistency & Test CoTs; correct-only training/examples are separate & \href{https://arxiv.org/abs/2406.10625v2}{v2}; \S2--4 \\
\rowcolor[HTML]{F7F9FC}
\citeauthor{arcuschin2025wild} & 2025 & Input pairs; shortcut detection & All comparison pairs; correct Putnam responses & \href{https://arxiv.org/abs/2503.08679v6}{v6}; \S2.1, 3.1 \\
\citeauthor{tutek2025unlearning} & 2025 & Unlearning; mistake insertion & Agreement between CoT and no-CoT predictions & \href{https://arxiv.org/abs/2502.14829v4}{v4}; Tab.~2 \\
\rowcolor[HTML]{F7F9FC}
\citeauthor{chen2025reasoning} & 2025 & Injected hints & Pairs where the answer switches to the hint & \href{https://arxiv.org/abs/2505.05410v1}{v1}; \S2.1 \\
\citeauthor{bogdan2025anchors} & 2025 & Step resampling and attention & Outcome-balanced MATH traces; correct MMLU CoTs & \href{https://arxiv.org/abs/2506.19143v4}{v4}; \S2.1, 6.1 \\
\rowcolor[HTML]{F7F9FC}
\citeauthor{zaman2025causal} & 2025 & Knowledge edits; corruption & Constructed explanation pairs; top-class score change & \href{https://arxiv.org/abs/2502.18848v3}{v3}; \S3.2; App.~D.1 \\
\citeauthor{al2025faithcot} & 2026 & Instance-level detection & Annotated correct and incorrect traces & \href{https://arxiv.org/abs/2510.04040v2}{v2}; \S3.2; App.~B.1 \\
\rowcolor[HTML]{F7F9FC}
\citeauthor{al2026faithfulness} & 2026 & Information flow; hint edits & Wrong-hint cases; behavior-selected verbalization & \href{https://arxiv.org/abs/2605.24286v1}{v1}; \S6.1--6.4 \\
\citeauthor{al2026counterfactual} & 2026 & Counterfactual input simulation & Simulator and cue-influence test sets & \href{https://arxiv.org/abs/2602.20710v2}{v2}; \S3--4 \\
\rowcolor[HTML]{F7F9FC}
\citeauthor{bonafide2026} & 2026 & Ground-truth meta-evaluation & Correct outright or wrong-hint-following responses & \href{https://arxiv.org/abs/2605.25052v1}{v1}; \S4.1--4.2 \\
\citeauthor{c2faith2026} & 2026 & Step-validity/coverage judges & Perfect source chains: every step labelled correct & \href{https://arxiv.org/abs/2603.05167v2}{v2}; \S3; Alg.~1 \\
\bottomrule
\end{tabularx}
\par\vspace{3pt}\noindent\parbox{\linewidth}{\raggedright\footnotesize Source locators refer to the linked versions and the named analyses.}
\end{table}

\subsection{Scope and Interpretation}
\label{sec:audit_rule}

We retain adjacent protocols to make their selection rules explicit. Lyu et al. derive the answer by deterministic execution of the symbolic trace; their human plausibility study selects correct outputs~\citep{al2023faithful}. Tanneru et al. select correct demonstrations or fine-tuning examples in some variants, which does not establish a correct-only test denominator~\citep{al2024hardness}. Counterfactual simulation training intervenes on inputs and trains the task model using simulator feedback~\citep{al2026counterfactual}; it is not the same operation as editing a task model's recorded trace. These distinctions prevent training, input, and test-population choices from being counted as one intervention protocol.

The reviewed sources also include protocols with correct and incorrect outputs. Radhakrishnan et al. average their trace-manipulation metrics over all reasoning samples~\citep{al2023question}; Paul et al. report mean test-set accuracy differences under counterfactual mediation~\citep{al2024making}. FaithCoT-Bench evaluates detectors on annotated correct and incorrect traces~\citep{al2025faithcot}. In Zaman and Srivastava's constructed explanation-pair testbed, corruption changes the score of the model's top predicted class~\citep{zaman2025causal}. These denominators differ from post-hoc selection on factual answer correctness.

This review identifies the reporting population for each measure in the cited versions. It compares reported protocols; estimating field-wide prevalence requires systematic sampling.

\section{Power Analysis}
\label{sec:power}

Table~\ref{tab:power} gives simulation-based power projections at $\alpha = 0.05$, computed from the initial $n = 100$ pilot before the scaled experiments. Both scaled runs subsequently reached the stated significance threshold. GLM-5.3 at $n = 1000$ achieved $p < 0.0001$ (Table~\ref{tab:empirical}); the projections remain based on the pilot effect size.

\begin{table}[htbp]
\caption{\textbf{Power projections.} Pilot-based estimates for the scaled deployment runs.}
\label{tab:power}
\centering
\fontsize{9}{10.8}\selectfont
\setlength{\tabcolsep}{4pt}
\renewcommand{\arraystretch}{1.18}
\arrayrulecolor{black!35}
\setlength{\heavyrulewidth}{0.65pt}
\setlength{\lightrulewidth}{0.35pt}
\begin{tabularx}{\linewidth}{@{}>{\raggedright\arraybackslash}Xcccc@{}}
\toprule
\rowcolor[HTML]{F1F6FE}
\textbf{\boldmath Model} & \textbf{\boldmath $n$ (pilot)} & \textbf{\boldmath Pilot $\Delta$ (pp)} & \textbf{\boldmath Power at $n$} & \textbf{\boldmath $n$ for 80\% power} \\

\midrule
glm-5.2 & 100 & \cellcolor[HTML]{FFF0DC}\color[HTML]{1B2735}+1.6 & 0.06 & $>$5{,}000 \\
\rowcolor[HTML]{EAF1FC}
glm-5.3 & 100 & \cellcolor[HTML]{FFF0DC}\color[HTML]{1B2735}+8.8 & 0.25 & $\approx$400 \\
deepseek-v4-flash & 100 & \cellcolor[HTML]{FFF0DC}\color[HTML]{1B2735}+3.8 & 0.09 & $\approx$1{,}800 \\
qwen3.8-max & 20 & \cellcolor[HTML]{F5F6F7}0.0 & \cellcolor[HTML]{F5F6F7}--- & \cellcolor[HTML]{F5F6F7}--- \\
\midrule
\rowcolor[HTML]{EAF1FC}
\multicolumn{5}{l}{\emph{Achieved:} glm-5.3 at $n = 1000$: $\Delta = +7.4$pp, $p < 0.0001$.} \\
\rowcolor[HTML]{EAF1FC}
\multicolumn{5}{l}{\emph{Achieved:} deepseek-v4-flash at $n = 300$: $\Delta = +6.0$pp, $p = 0.046$.} \\
\bottomrule
\end{tabularx}
\end{table}

The pilot projections give low power at $n = 100$. Detecting GLM-5.3's +8.8pp pilot gap at 80\% power requires approximately $n = 400$ instances; glm-5.2's smaller +1.6pp gap requires thousands. At the pilot $n = 100$, GLM-5.3's 95\% CI is [$-$1.5, +19.1]pp. Holding the effect at +8.8pp in the projection gives CI [+1.5, +16.1]pp at $n \approx 200$ and CI [+2.85, +14.75]pp at $n = 300$. These projected intervals depend on the pilot effect size.

\section{Within-Stratum Difficulty Analysis}
\label{sec:depth_analysis}

We stratify by \emph{interventionable-operation count}, the number of editable operations in an instance. This covariate takes the values $\{1,2,3,5,7,8,11\}$ in the generated tasks. The generator never emits counts 4, 6, 9 or 10; the tables retain these empty strata with $n=0$ to make the sample accounting explicit. Operation count is a \emph{proxy} for difficulty. It is fixed for each instance across models and precedes the model's answer.

Table~\ref{tab:depth_strat} presents GLM-5.3 results ($n = 1000$) stratified on that count.

In Tables~\ref{tab:depth_strat}, \ref{tab:twobytwo}, and~\ref{tab:strata_flash}, $a$ denotes compliant-and-correct, $b$ correct-but-non-compliant, $c$ incorrect-but-compliant, and $d$ incorrect-and-non-compliant. Missing or unparseable responses count as non-compliant. Each stratum's four cells sum to its sample size; the GLM-5.3 column totals match Table~\ref{tab:twobytwo} and reproduce its operation-count decomposition.

\begin{table}[htbp]
\caption{\textbf{GLM-5.3 results by operation count.} Joint outcome counts and selection gaps, with empty strata retained for completeness.}
\label{tab:depth_strat}
\centering
\fontsize{9}{10.8}\selectfont
\setlength{\tabcolsep}{4pt}
\renewcommand{\arraystretch}{1.18}
\arrayrulecolor{black!35}
\setlength{\heavyrulewidth}{0.65pt}
\setlength{\lightrulewidth}{0.35pt}
\begin{tabularx}{\linewidth}{@{}>{\raggedright\arraybackslash}Xrcccccccc@{}}
\toprule
\rowcolor[HTML]{F1F6FE}
\textbf{\boldmath Steps} & \textbf{\boldmath $n$} & \textbf{\boldmath $a$} & \textbf{\boldmath $b$} & \textbf{\boldmath $c$} & \textbf{\boldmath $d$} & \textbf{\boldmath Acc.} & \textbf{\boldmath $\Fpop$} & \textbf{\boldmath $\Fsel$} & \textbf{\boldmath $\Delta$ (pp)} \\

\midrule
1 (simple)    & 115 & 81 & 5  & 25 & 4  & 75\% & \cellcolor[HTML]{F5F3FC}0.922 & \cellcolor[HTML]{FFF6E9}0.942 & \cellcolor[HTML]{FFF0DC}\color[HTML]{1B2735}$+2.0$ \\
2 (easy)      & 104 & 51 & 18 & 27 & 8  & 66\% & \cellcolor[HTML]{F5F3FC}0.750 & \cellcolor[HTML]{FFF6E9}0.739 & \cellcolor[HTML]{EDF8F9}\color[HTML]{076872}$-1.1$ \\
3 (moderate)  & 232 & 53 & 77 & 35 & 67 & 56\% & \cellcolor[HTML]{F5F3FC}0.379 & \cellcolor[HTML]{FFF6E9}0.408 & \cellcolor[HTML]{FFF0DC}\color[HTML]{1B2735}$+2.8$ \\
\rowcolor[HTML]{F5F6F7}
4             & 0   & 0  & 0  & 0  & 0  & ---  & ---   & ---   & ---    \\
5 (medium)    & 223 & 56 & 57 & 39 & 71 & 51\% & \cellcolor[HTML]{F5F3FC}0.426 & \cellcolor[HTML]{FFF6E9}0.496 & \cellcolor[HTML]{FFF0DC}\color[HTML]{1B2735}$+7.0$ \\
\rowcolor[HTML]{F5F6F7}
6             & 0   & 0  & 0  & 0  & 0  & ---  & ---   & ---   & ---    \\
7 (hard)      & 100 & 17 & 25 & 15 & 43 & 42\% & \cellcolor[HTML]{F5F3FC}0.320 & \cellcolor[HTML]{FFF6E9}0.405 & \cellcolor[HTML]{FFF0DC}\color[HTML]{1B2735}$+8.5$ \\
8 (very hard) & 123 & 22 & 22 & 29 & 50 & 36\% & \cellcolor[HTML]{F5F3FC}0.415 & \cellcolor[HTML]{FFF6E9}0.500 & \cellcolor[HTML]{FFF0DC}\color[HTML]{1B2735}$+8.5$ \\
\rowcolor[HTML]{F5F6F7}
9             & 0   & 0  & 0  & 0  & 0  & ---  & ---   & ---   & ---    \\
\rowcolor[HTML]{F5F6F7}
10            & 0   & 0  & 0  & 0  & 0  & ---  & ---   & ---   & ---    \\
11 (hardest)  & 103 & 19 & 12 & 38 & 34 & 30\% & \cellcolor[HTML]{F5F3FC}0.553 & \cellcolor[HTML]{FFF6E9}0.613 & \cellcolor[HTML]{FFF0DC}\color[HTML]{1B2735}$+6.0$ \\
\midrule
\rowcolor[HTML]{EAF1FC}
\textbf{Total} & \textbf{1000} & \textbf{299} & \textbf{216} & \textbf{208} & \textbf{277} & & & & \\
\bottomrule
\end{tabularx}
\end{table}

Easy strata (1--3 operations) have small gaps ($-1.1$ to $+2.8$pp), higher accuracy, and less selective filtering than the remaining strata. Medium-to-hard strata (5--8) have the largest gaps ($+7.0$ to $+8.5$pp), with a smaller fraction retained by the filter. The operation-count/gap rank association is positive but non-significant (Spearman $\rho = 0.75$, $p = 0.052$, $n = 7$ strata).

At 11 operations the gap is smaller ($+6.0$pp) than at 5--8. This stratum's $\Fpop$ ($0.553$) exceeds the values at 3--8 operations, but remains below those at 1 and 2 operations ($0.922$ and $0.750$). Its null rate is close to the pooled rate ($7.8\%$ versus $7.2\%$). Sampling variation in the small stratum ($n=103$) and differences in graph structure are both possible explanations. This stratum contains the deepest graphs, where sequential dependencies can be easier than branching ones. The current data do not distinguish these explanations.

Operation count therefore does not consistently order the strata by difficulty. The 42.8\% between-stratum share reported in Section~\ref{sec:empirical} quantifies \emph{variation explained by the operation count}.

\subsection{Exact Covariance Decomposition}
\label{sec:decomp_details}

Table~\ref{tab:decomp} reports the decomposition from Section~\ref{sec:empirical}, computed by the law of total covariance \eqref{eq:total_cov} with interventionable-operation count as the stratifier. The identity holds to floating-point precision without an independence assumption. It partitions the observed covariance; the IPW analysis below instead reweights selected instances under a propensity model.

From \eqref{eq:bridge}, $\Delta = \Cov(A,C)/P(C{=}1)$ and $1/P(C{=}1)$ is constant within a model's run. The between- and within-stratum terms of \eqref{eq:total_cov} partition $\Cov(A,C)$. Multiplying both terms by the same constant partitions $\Delta$, so the covariance and gap have the same percentage split. Table~\ref{tab:decomp} reports shares \emph{of the gap}.

Table~\ref{tab:twobytwo} gives each model's four $(A,C)$ cells under the null-as-non-compliant convention. Tables~\ref{tab:depth_strat} and~\ref{tab:strata_flash} give the per-stratum cells. These printed counts reproduce every point-estimate share in Table~\ref{tab:decomp}.

The joint counts give $\Fsel=a/(a+b)$ and $\Fpop=(a+c)/n$. The odds ratio is $ad/(bc)$, with a $0.5$ continuity correction if a cell is zero. The pooled Cochran--Mantel--Haenszel analysis uses the four sub-ceiling deployments.

\begin{table}[htbp]
\caption{\textbf{Joint compliance and correctness counts.} Ceiling controls are excluded from the pooled analysis.}
\label{tab:twobytwo}
\centering
\fontsize{9}{10.8}\selectfont
\setlength{\tabcolsep}{4pt}
\renewcommand{\arraystretch}{1.18}
\arrayrulecolor{black!35}
\setlength{\heavyrulewidth}{0.65pt}
\setlength{\lightrulewidth}{0.35pt}
\begin{tabularx}{\linewidth}{@{}>{\raggedright\arraybackslash}Xrcccccc@{}}
\toprule
\rowcolor[HTML]{F1F6FE}
\textbf{\boldmath Model} & \textbf{\boldmath $n$} & \textbf{\boldmath $a$} & \textbf{\boldmath $b$} & \textbf{\boldmath $c$} & \textbf{\boldmath $d$} & \textbf{\boldmath $\Delta$ (pp)} & \textbf{\boldmath OR} \\

\midrule
\rowcolor[HTML]{EAF1FC}
GLM-5.3 & 1000 & 299 & 216 & 208 & 277 & \cellcolor[HTML]{FFF0DC}\color[HTML]{1B2735}$+7.4$ & 1.84 \\
deepseek-v4-flash & 300 & 85 & 60 & 73 & 82 & \cellcolor[HTML]{FFF0DC}\color[HTML]{1B2735}$+6.0$ & 1.59 \\
deepseek-v4-pro & 200 & 77 & 56 & 34 & 33 & \cellcolor[HTML]{FFF0DC}\color[HTML]{1B2735}$+2.4$ & 1.33 \\
glm-5.2 & 100 & 36 & 43 & 8 & 13 & \cellcolor[HTML]{FFF0DC}\color[HTML]{1B2735}$+1.6$ & 1.36 \\
\midrule
\rowcolor[HTML]{F5F6F7}
\multicolumn{8}{l}{\emph{Boundary controls ($P(C{=}1)=1$; $c=d=0$ by construction):}} \\
qwen3.8-max & 20 & 12 & 8 & 0 & 0 & \cellcolor[HTML]{F5F6F7}$0.0$ & --- \\
Gemini 3.5 Flash Lite & 17 & 9 & 8 & 0 & 0 & \cellcolor[HTML]{F5F6F7}$0.0$ & --- \\
\bottomrule
\end{tabularx}
\end{table}

Figure~\ref{fig:selection_detail} shows how the GLM-5.3 joint counts determine the reporting-population contrast.

\begin{figure}[!htbp]
\centering
\includegraphics[width=\textwidth]{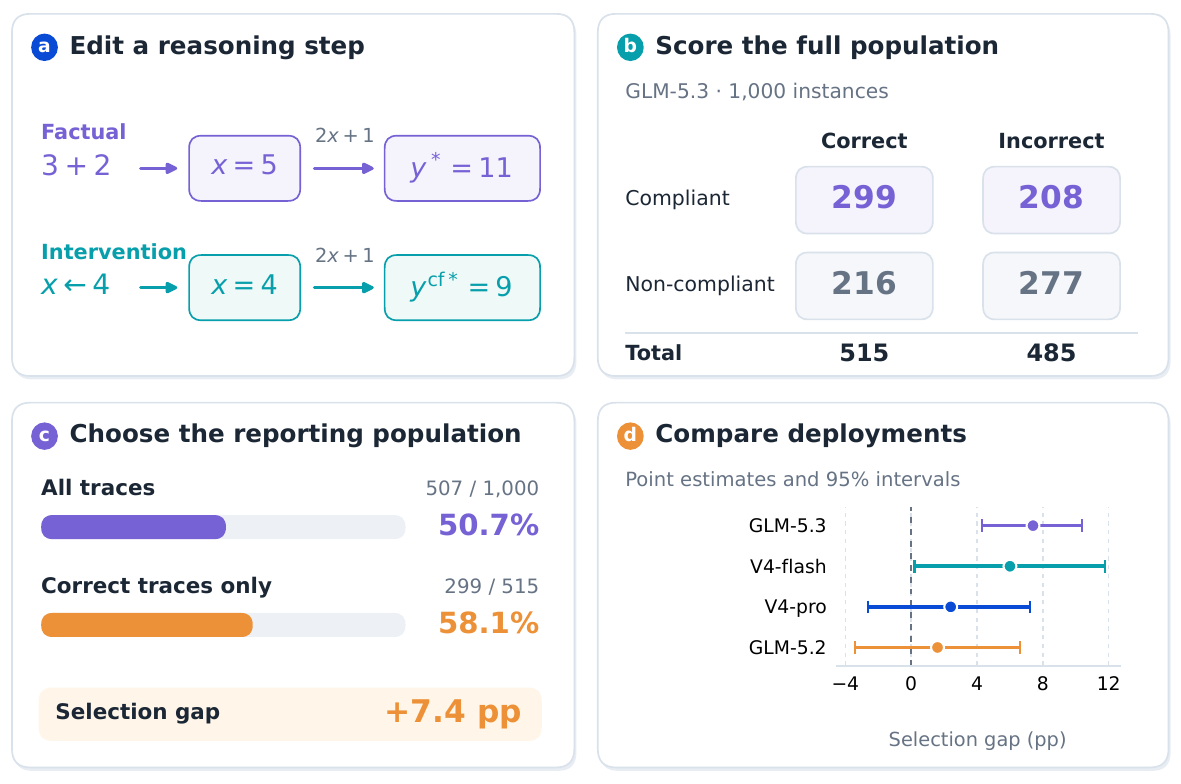}
\caption{\textbf{Worked selection calculation.} An illustrative intervention and the joint counts connecting oracle compliance to all-trace and correctness-selected estimates.}
\label{fig:selection_detail}
\end{figure}

The descriptive 95\% percentile intervals in Table~\ref{tab:decomp} use an instance-level bootstrap with 10,000 resamples. Row-level data and exploratory recomputation outputs are not included in the public release. The share divides by $\Cov(A,C)$, which can approach zero during resampling. This produces asymmetric intervals; the flash interval crosses zero.

Operation count gives a 42.8\% between-stratum share for GLM-5.3, while oracle-displacement magnitude gives 26.6\% on the same records. These covariate-dependent shares do not identify a causal contribution of latent difficulty.

\begin{table}[htbp]
\caption{\textbf{Covariance decomposition by deployment.} Brackets report bootstrap intervals; the within-stratum component remains unattributed.}
\label{tab:decomp}
\centering
\fontsize{9}{10.8}\selectfont
\setlength{\tabcolsep}{4pt}
\renewcommand{\arraystretch}{1.18}
\arrayrulecolor{black!35}
\setlength{\heavyrulewidth}{0.65pt}
\setlength{\lightrulewidth}{0.35pt}
\begin{tabularx}{\linewidth}{@{}>{\raggedright\arraybackslash}Xcccc@{}}
\toprule
\rowcolor[HTML]{F1F6FE}
\textbf{\boldmath Model} & \textbf{\boldmath $n$} & \textbf{\boldmath $\Delta$ (pp)} & \textbf{\boldmath Between-stratum} & \textbf{\boldmath Within-stratum} \\

\midrule
\rowcolor[HTML]{EAF1FC}
GLM-5.3 & 1000 & \cellcolor[HTML]{FFF0DC}\color[HTML]{1B2735}7.36 & 42.8\% \;[26.8, 69.1] & 57.2\% \\
deepseek-v4-flash & 300 & \cellcolor[HTML]{FFF0DC}\color[HTML]{1B2735}5.95 & 21.9\% \;[$-39.0$, 112.2] & 78.1\% \\
\bottomrule
\end{tabularx}
\end{table}

The flash strata in Table~\ref{tab:strata_flash} comprise $n=300$ instances and yield the 21.9\% between-stratum share in Table~\ref{tab:decomp}. The corresponding GLM-5.3 strata are given in Table~\ref{tab:depth_strat}.

\begin{table}[htbp]
\caption{\textbf{DeepSeek-v4-flash strata.} Joint counts by interventionable-operation count for covariance decomposition.}
\label{tab:strata_flash}
\centering
\fontsize{9}{10.8}\selectfont
\setlength{\tabcolsep}{4pt}
\renewcommand{\arraystretch}{1.18}
\arrayrulecolor{black!35}
\setlength{\heavyrulewidth}{0.65pt}
\setlength{\lightrulewidth}{0.35pt}
\begin{tabularx}{\linewidth}{@{}>{\raggedright\arraybackslash}Xrcccc@{}}
\toprule
\rowcolor[HTML]{F1F6FE}
\textbf{\boldmath Steps} & \textbf{\boldmath $n$} & \textbf{\boldmath $a$} & \textbf{\boldmath $b$} & \textbf{\boldmath $c$} & \textbf{\boldmath $d$} \\

\midrule
1 (simple) & 36 & 21 & 1 & 12 & 2 \\
2 (easy) & 42 & 9 & 9 & 9 & 15 \\
3 (moderate) & 63 & 12 & 21 & 9 & 21 \\
5 (medium) & 59 & 15 & 12 & 17 & 15 \\
7 (hard) & 33 & 5 & 8 & 8 & 12 \\
8 (very hard) & 37 & 13 & 5 & 11 & 8 \\
11 (hardest) & 30 & 10 & 4 & 7 & 9 \\
\midrule
\rowcolor[HTML]{EAF1FC}
\textbf{Total} & \textbf{300} & \textbf{85} & \textbf{60} & \textbf{73} & \textbf{82} \\
\bottomrule
\end{tabularx}
\end{table}

\subsection{Inverse Propensity Weighting (IPW) Model Details}
\label{sec:ipw_details}

We use an inverse-probability-weighted estimator~\citep{horvitz1952generalization}. We fit a logistic propensity model, $\hat{P}(C{=}1 \mid \mathrm{steps}) = \mathrm{logistic}(\hat{\beta}_0 + \hat{\beta}_1 \cdot \mathrm{steps})$, where $\mathrm{steps}$ is the number of interventionable operations ($\hat{\beta}_0 = 0.954$, $\hat{\beta}_1 = -0.181$). Importance reweighting is a standard correction for sample-selection bias in supervised learning~\citep{zadrozny2004learning,cortes2008sample}.

We apply weights $w_i = 1 / \hat{P}(C{=}1 \mid D_i)$ to all correctness-selected instances. The numerator sums weights for compliant selected instances; the denominator sums weights for all selected instances. Under $A \perp C \mid D,I$, this normalized weighted mean is a Hájek estimator of $\Fpop$. The definition of $\Fsel$ is unchanged. We apply no weight trimming or capping. The effective sample size is $n_{\mathrm{eff}} = (\sum w_i)^2 / \sum w_i^2 = 472$, with mean weight 1.95, median 1.66, and maximum 3.83.

The logistic propensity gives $\hat{F}_{\mathrm{IPW}} = 0.564$, leaving a $+5.7$pp residual relative to the direct $\Fpop=0.507$, a 23\% reduction from the raw selected gap. A nonparametric per-stratum propensity gives $\hat{F}_{\mathrm{IPW}} = 0.555$, a 34\% reduction. Both adjustments depend on the propensity specification, and neither identifies a causal decomposition. The residual remains unattributed.

\section{Reversal Sensitivity Analysis}
\label{sec:reversal_sensitivity}

Table~\ref{tab:reversal_sensitivity} varies $\pi_{10}=P(C=1\mid R=1,S=0)$ while holding $\pi_{00}=0.05$, $\pi_{01}=0.80$, and $\pi_{11}=1.00$. The reference model has $(p_{r,B},p_{s,B})=(0.70,0.01)$. The full candidate grid comprises 199 equally spaced $p_{r,A}$ values from 0.01 to 0.99 and 201 equally spaced $p_{s,A}$ values from 0 to 1: 39,999 candidates in total. We count a reversal only when $p_{r,A}<p_{r,B}$ and $\Fsel(\mathcal A)>\Fsel(\mathcal B)$, using strict inequalities.

Figure~\ref{fig:reversal} shows $\pi_{10}=0.001$, where faithful reasoning without shortcut support almost never succeeds. Shortcut prevalence varies across candidates; $S$ is not fixed to one. The plot spans $(p_{r,A},p_{s,A})\in[0,1]^2$, and the reported 40.6\% is $16{,}224/39{,}999$ on the discrete full grid specified above. Candidates with $p_{r,A}\geq0.70$ do not meet the lower-population-faithfulness condition.

\begin{figure}[htbp]
\centering
\includegraphics[width=0.75\textwidth]{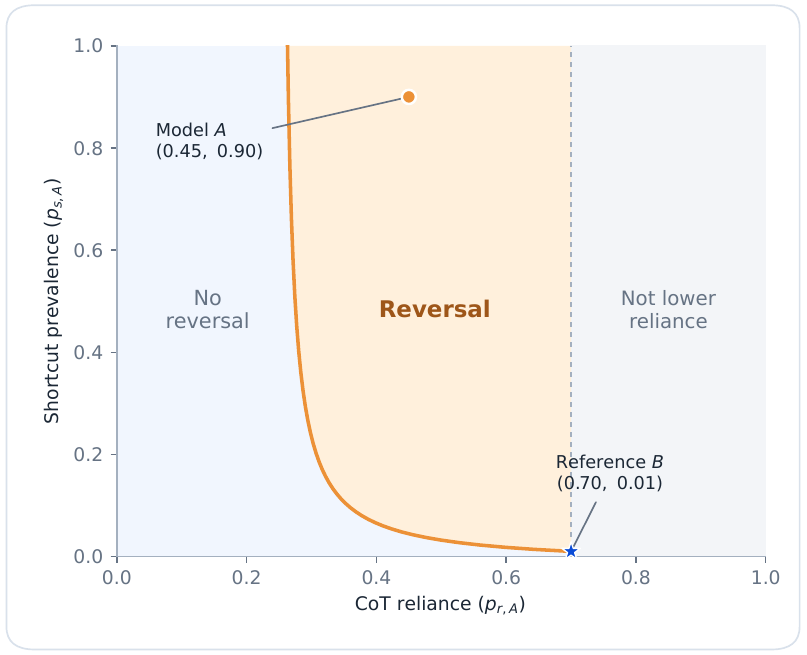}
\caption{\textbf{Ranking reversals at $\pi_{10}=0.001$.} The orange region contains candidates with lower population faithfulness but higher selected-trace scores than reference $\mathcal B$.}
\label{fig:reversal}
\end{figure}

Independent exact-rational evaluation reproduces the counts below. Requiring both inequality margins to exceed $10^{-8}$ gives the same counts, excluding numerical ties as their source.

\begin{table}[htbp]
\caption{\textbf{Reversal sensitivity.} Counts and percentages on the same 39,999-candidate grid, with the reference and other correctness parameters held fixed.}
\label{tab:reversal_sensitivity}
\centering
\fontsize{9}{10.8}\selectfont
\setlength{\tabcolsep}{4pt}
\renewcommand{\arraystretch}{1.18}
\arrayrulecolor{black!35}
\setlength{\heavyrulewidth}{0.65pt}
\setlength{\lightrulewidth}{0.35pt}
\begin{tabularx}{\linewidth}{@{}>{\raggedright\arraybackslash}Xcc@{}}
\toprule
\rowcolor[HTML]{F1F6FE}
\textbf{\boldmath $\pi_{10}$} & \textbf{\boldmath Reversal fraction} & \textbf{\boldmath Reversal count} \\

\midrule
0.001 & \cellcolor[HTML]{F5F6F7}40.561\% & 16,224 \\
0.05 & \cellcolor[HTML]{F5F6F7}3.568\% & 1,427 \\
0.1 & \cellcolor[HTML]{F5F6F7}0.0075\% & 3 \\
0.2 & \cellcolor[HTML]{F5F6F7}0.0150\% & 6 \\
\bottomrule
\end{tabularx}
\end{table}

At $\pi_{10}=0.001$ and $0.05$, every reversal has $p_{s,A}>0.01$, consistent with the positive complementarity determinant $\pi_{11}\pi_{00}-\pi_{10}\pi_{01}$. At $\pi_{10}=0.1$ and $0.2$, that determinant is negative, ruling out reversals with $p_{s,A}>p_{s,B}$. The 3 and 6 remaining reversals instead have lower shortcut prevalence: respectively 2 and 4 occur at $p_{s,A}=0$, and 1 and 2 at $p_{s,A}=0.005$. The table describes these four settings; it does not establish that reversal is impossible for every $\pi_{10}\geq0.2$.

\section{Leave-One-Out Sensitivity Analysis}
\label{sec:leave_one_out}

Table~\ref{tab:leave_one_out} reports the CMH test after removing each model in turn. Figure~\ref{fig:forest_plot} shows the deployment-specific and pooled odds ratios. All leave-one-out tests remain significant ($p < 0.03$), so the pooled association remains detectable after removing any single model. GLM-5.3 is the largest single contributor.

Across the four sub-ceiling models, the pooled test gives $z = 5.19$, $p < 10^{-6}$, OR $= 1.70$, and $I^2 = 0\%$. Gaps also persist within difficulty strata, though their rank association is only suggestive ($\rho = 0.75$, $p = 0.052$; Appendix~\ref{sec:depth_analysis}). The per-model ORs are 1.33, 1.59, 1.36, and 1.84 across the combined sample ($n = 1600$). The null-compliance rate is 5.44\% ($87/1600$); nulls are scored non-compliant under the headline convention, with a sensitivity check in Appendix~\ref{sec:compliance_discussion}. Elicitation found 10\% cue use in GLM-5.3 (Appendix~\ref{sec:shortcut_elicitation}).

\begin{table}[htbp]
\caption{\textbf{Leave-one-out analysis.} Pooled association after omitting each deployment.}
\label{tab:leave_one_out}
\centering
\fontsize{9}{10.8}\selectfont
\setlength{\tabcolsep}{4pt}
\renewcommand{\arraystretch}{1.18}
\arrayrulecolor{black!35}
\setlength{\heavyrulewidth}{0.65pt}
\setlength{\lightrulewidth}{0.35pt}
\begin{tabularx}{\linewidth}{@{}>{\raggedright\arraybackslash}Xccc@{}}
\toprule
\rowcolor[HTML]{F1F6FE}
\textbf{\boldmath Removed} & \textbf{\boldmath Remaining $n$} & \textbf{\boldmath $z$} & \textbf{\boldmath $p$ (two-sided)} \\

\midrule
\rowcolor[HTML]{EAF1FC}
GLM-5.3 & 600 & 2.24 & 0.025 \\
deepseek-v4-flash & 1300 & 4.80 & $<$0.0001 \\
deepseek-v4-pro & 1400 & 5.17 & $<$0.0001 \\
GLM-5.2 & 1500 & 5.17 & $<$0.0001 \\
\bottomrule
\end{tabularx}
\end{table}

\begin{figure}[htbp]
\centering
\includegraphics[width=\textwidth]{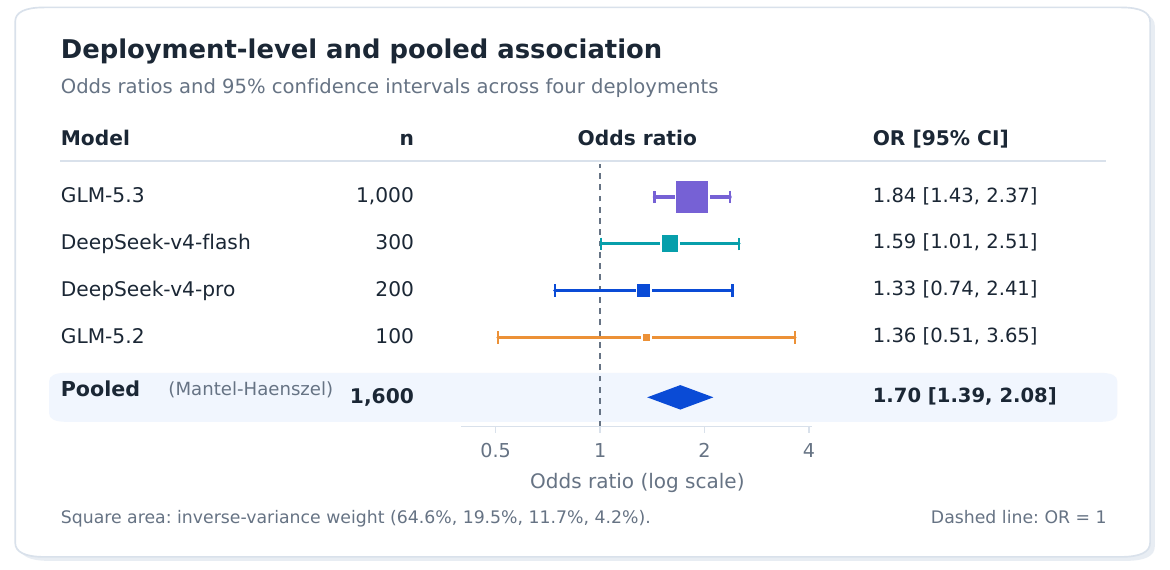}
\caption{\textbf{Deployment-level and pooled odds ratios.} Square areas encode inverse-variance weights; horizontal lines show individual confidence intervals and the diamond the pooled interval.}
\label{fig:forest_plot}
\end{figure}

\section{Shortcut Elicitation Details}
\label{sec:shortcut_elicitation}

Following prior work on biased feature cues~\citep{al2023language}, we applied adversarial framing cues (authority, social proof, and shortcut suggestion) to 50 arithmetic DAG instances per model. GLM-5.3 exhibited 10\% shortcut use overall (5/50, 95\% Wilson CI [4.3\%, 21.4\%]), compared to 0\% under neutral framing; deepseek-v4-flash showed 0\%. The observed GLM-5.3 gap (+3.0pp) is directionally consistent with the theory but not significant at $n = 50$. All five shortcuts occurred on correct-cue instances and passed the correctness filter; these observations do not quantify a collider contribution.

\paragraph{Targeted elicitation on a second deployment.} We vary the framing of the same arithmetic DAG task. V1 asks the model to verify a stated value; V2 presents it as descriptive social proof; V4 explicitly instructs the model to rely on a stated ``definitive'' value. Under V4, deepseek-v4-pro followed the cue on 12 of 12 informative (incorrect-value) instances and answered all 12 incorrectly. The correctness filter removes all of these rows. Paired controls for three instances span easy and hard tiers: the same task is answered correctly without a cue and incorrectly with one.

Under V2, the same model followed the cue on 0 of 4 informative instances, computing correctly and explicitly rejecting it. Flash followed the cue on 0 of 5 informative instances and glm-5.2 on 0 of 2; neither model followed it under any tested framing. One V1 call returned no completed answer and was not scored. The observed following rate was approximately $0$ under V2, unmeasurable under V1, and approximately $1$ under V4, with no framing producing partial following in this probe. Following depended on both framing and model; these observations do not support interpolating a graded response.

In a representative case, the model derived the correct value from the DAG but substituted the cue in its final answer. The error occurred at the \emph{output stage}. The correctness filter excludes this case despite the trace's correct computation.

This exploratory probe identifies framing-dependent cue following in small, post-hoc samples. It was not pre-registered, uses a separate serving endpoint from Table~\ref{tab:empirical}, and covers only part of the hard-task tier. We therefore report it separately from the headline runs. Estimating a population rate or testing the accuracy-axis peak requires a larger design with partial-following conditions.

\paragraph{Cross-family extension.} We apply the V4/V2/V1 framings to additional model families. Each row reports a separate deployment; we do not pool across endpoints.

\begin{center}
\fontsize{9}{10.8}\selectfont
\setlength{\tabcolsep}{4pt}
\renewcommand{\arraystretch}{1.18}
\arrayrulecolor{black!35}
\setlength{\heavyrulewidth}{0.65pt}
\setlength{\lightrulewidth}{0.35pt}
\begin{tabularx}{\linewidth}{@{}>{\raggedright\arraybackslash}Xcc@{}}
\toprule
\rowcolor[HTML]{F1F6FE}
\textbf{\boldmath Model family} & \textbf{\boldmath Measured} & \textbf{\boldmath Followed} \\

\midrule
gemini-2.5-flash & 6 & \cellcolor[HTML]{F5F6F7}6 \\
deepseek-v4-pro & 12 & \cellcolor[HTML]{F5F6F7}12 \\
kimi-k3 & 6 & \cellcolor[HTML]{F5F6F7}2 \\
glm-5.2 & 5 & \cellcolor[HTML]{F5F6F7}0 \\
deepseek-v4-flash & 2 & \cellcolor[HTML]{F5F6F7}0 \\
\bottomrule
\end{tabularx}
\end{center}
\noindent\textit{Counts are measured rows; calls without a completed answer are excluded from the rates.}

Across the single-endpoint and cross-family probes combined, V2 yielded 0 of 14 informative follows and V1 yielded 0 of 7. V4 was the only framing to induce following. The rates range from 0\% to 100\% across families, consistent with a framing $\times$ family interaction.

\paragraph{V4-neutral control.} The V4-neutral condition uses the same wording instructing deference but sets the stated value to the \emph{correct} answer. In 25 measured neutral rows, no deployment cited an instruction to skip computation. These rows include 13 discriminating pairs, in which the deployment followed under V4-wrong and was also tested under V4-neutral. Gemini supplies behavioural evidence because it returns no plaintext reasoning. In the other paired cases, compliance language appears when the stated value differs from the computed value and disappears when the stated value is correct. This contrast is observed; interpreting it as a consistency check remains a hypothesis that the sample does not independently confirm.

All cross-family comparisons are also cross-endpoint. The probe covers only part of the hard-task tier. Gemini's endpoint quota limited its sample size, and one model family used a non-default temperature because its endpoint rejected temperature $=0$. These conditions limit the comparisons to exploratory post-hoc observations.

\paragraph{Selection-gap estimates under adversarial framing.} The 100 evaluation records used to estimate shortcut use also have factual and counterfactual answers, oracle values, correctness labels, and compliance scores. They use the same arithmetic DAGs as the headline evaluation, with operation counts $\in \{1,2,3,5,7,8,11\}$. We apply the same estimator to this second \emph{cue regime}; it adds no new task family.

Table~\ref{tab:turpin_estimator} uses the earlier adversarial run, with $n=50$ per model on the same endpoints as Table~\ref{tab:empirical}; it does not summarize the later targeted probes. Its tests are Yates-corrected $\chi^2$ tests with one degree of freedom, and neither gap is significant. For flash, the incorrect-cue joint cells $(a,b,c,d)=(0,5,3,6)$ determine the overall gap's negative sign.

\begin{table}[htbp]
\caption{\textbf{Selection gaps under adversarial framing.} Estimates from the earlier run, separate from the subsequent exploratory probes.}
\label{tab:turpin_estimator}
\centering
\fontsize{9}{10.8}\selectfont
\setlength{\tabcolsep}{4pt}
\renewcommand{\arraystretch}{1.18}
\arrayrulecolor{black!35}
\setlength{\heavyrulewidth}{0.65pt}
\setlength{\lightrulewidth}{0.35pt}
\begin{tabularx}{\linewidth}{@{}>{\raggedright\arraybackslash}Xcccccc@{}}
\toprule
\rowcolor[HTML]{F1F6FE}
\textbf{\boldmath Model} & \textbf{\boldmath $n$} & \textbf{\boldmath Acc.} & \textbf{\boldmath $\Fpop$} & \textbf{\boldmath $\Fsel$} & \textbf{\boldmath $\Delta$ (pp)} & \textbf{\boldmath $\chi^2$ ($p$)} \\

\midrule
\rowcolor[HTML]{EAF1FC}
GLM-5.3 & 50 & 78\% & \cellcolor[HTML]{F5F3FC}0.560 & \cellcolor[HTML]{FFF6E9}0.590 & \cellcolor[HTML]{FFF0DC}\color[HTML]{1B2735}$+3.0$ & 0.206 (0.650) \\
deepseek-v4-flash & 50 & 44\% & \cellcolor[HTML]{F5F3FC}0.220 & \cellcolor[HTML]{FFF6E9}0.182 & \cellcolor[HTML]{EDF8F9}\color[HTML]{076872}$-3.8$ & 0.055 (0.815) \\
\bottomrule
\multicolumn{7}{l}{\footnotesize Breakdown by cue correctness: flash correct-cue $+1.31$pp ($n{=}36$), incorrect-cue $-21.43$pp ($n{=}14$);} \\
\multicolumn{7}{l}{\footnotesize GLM-5.3 correct-cue $+4.50$pp ($n{=}36$), incorrect-cue $+1.43$pp ($n{=}14$).} \\
\end{tabularx}
\end{table}

Both gaps are non-significant at $n = 50$, with $39$ and $22$ correct rows respectively. The pilot power range is $0.06$--$0.25$ at $n = 100$ (Appendix~\ref{sec:power}), so these smaller runs have limited power. The estimates describe selection in the same data used to measure shortcut use.

The negative flash estimate does not identify a mechanism. The collider pathway requires an \emph{active} shortcut branch, and flash shows $0/50$ shortcuts here (all five elicited shortcuts are GLM-5.3's, and all fall on correct-cue instances that pass the filter). The negative overall figure is a $4$-versus-$4.84$ cell difference concentrated in $14$ incorrect-cue rows whose selected-trace numerator is zero; this cell difference provides no evidence for a specific pathway.

\section{Empirical Gap vs.\ Accuracy}
\label{sec:goldilocks_fig}

Figure~\ref{fig:goldilocks} shows the deployment estimates with complete confidence intervals.

\begin{figure}[!htbp]
\centering
\includegraphics[width=\textwidth]{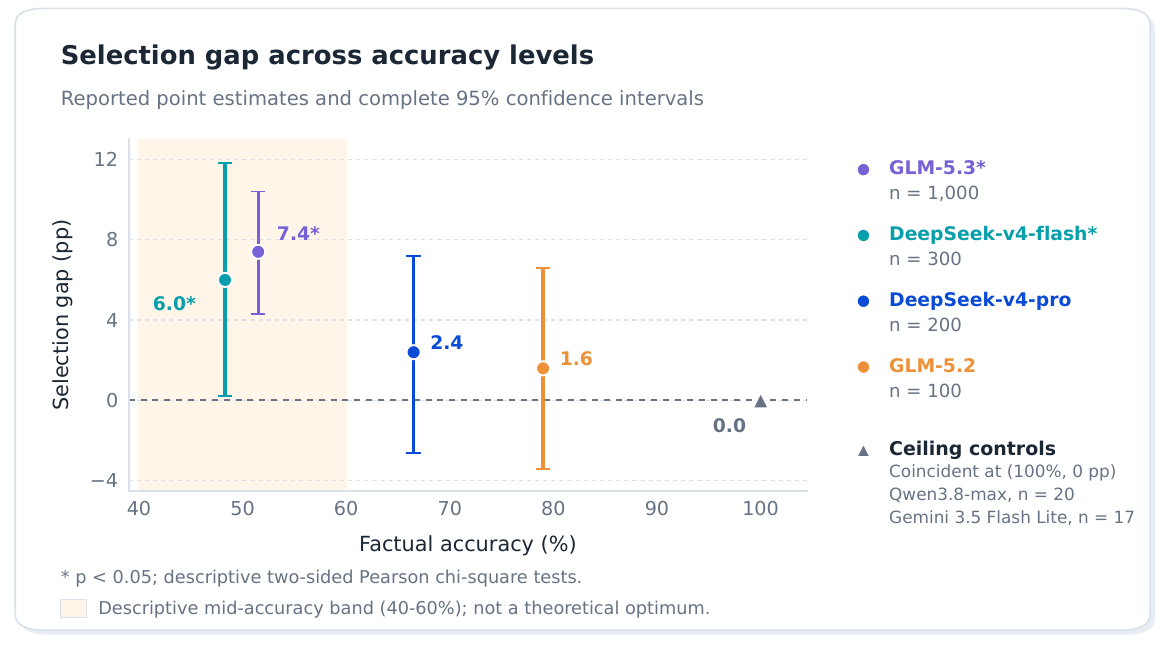}
\caption{\textbf{Selection gap versus accuracy.} All confidence intervals are shown; asterisks mark descriptive $p<0.05$.}
\label{fig:goldilocks_original}
\label{fig:goldilocks}
\end{figure}

The shaded mid-accuracy band is descriptive. The theory instead places the peak at $P^* = \sqrt{p_s}$ (Corollary~\ref{cor:accuracy_hump}). At the implementation-check value $p_s = 0.7$, the peak is $0.84$, above every tested sub-ceiling deployment. The two ceiling controls also do not locate the peak: $P(C{=}1) = 1$ makes the filter vacuous and forces $\Delta = 0$ by construction.

\section{Ceiling Validation}
\label{sec:ceiling}

The two ceiling controls use task difficulties at which every valid response is factually correct. Gemini 3.5 Flash Lite (Google) returned 17 valid responses from 20 requests (three were unparseable), with $\Fsel=\Fpop=0.529$; qwen3.8-max has $n=20$, 100\% accuracy, and $\Delta=0$. Their lengths are unmatched, so the numerical scores are not directly comparable. In both constructed evaluation sets, $P(C{=}1)=1$ makes the filter vacuous and forces $\Delta=0$ (Proposition~\ref{prop:collider_bias}). These runs check the implementation at the boundary; they do not independently establish a selection effect.

On Boolean circuits, the same all-trace protocol with GLM-5.3 ($n=200$) gives 99.5\% accuracy, $\Fpop=\Fsel=0.995$, and $\Delta=0.0$pp, compared with $+7.4$pp on arithmetic DAGs at intermediate accuracy. This second-family control agrees with the predicted limit $\Delta\to0$ as $P(C{=}1)\to1$ (Section~\ref{sec:simulation}); the circuits were too easy to measure the model below ceiling.

At depth~30 (90 gates), we verified non-thinking mode on every row (\texttt{reasoning\_tokens}$=0$), with no parse failures. Deepseek-v4-flash answered 10/10 correctly (one-sided 95\% lower bound $0.741$); glm-5.2 answered all 10 calls correctly (bound $0.688$ at the conservative $n=8$ distinct instances). Both bounds exclude the 40--60\% band. These two deployments are separate from the GLM-5.3 control.

A further deepseek-v4-flash test increased the serial chain from 90 to 120 to 300 gates (depths~30, 40, and~100); accuracy remained 10/10. The relay's 180\,s limit prevented measurement at 600 gates. Thus the reachable circuit depths provide no non-ceiling measurement. Arithmetic DAGs instead reach 48--52\% accuracy at depths 2--4. This contrast documents reachable difficulty for the tested deployments, not a difference in mechanism or an extension of the selection-bias evidence.

\section{Compliance on Incorrect Traces and Null Handling}
\label{sec:compliance_discussion}

Compliance on incorrect traces (38--51\% across models) is semantically ambiguous: it may reflect genuine partial faithfulness, coincidental answer changes, or error-correlated sensitivity. All-trace evaluation includes these instances, so $\Fpop$ is informative if partial faithfulness dominates and inflated if coincidental changes do; the estimator is not uniformly conservative in either direction.

\emph{Null handling.} Our primary protocol scores unparseable or missing responses as non-compliant ($A{=}0$). Nulls are unevenly distributed across instances, so assigning $A{=}0$ can move the gap in either direction. We test this convention by treating null responses as missing and dropping them from both $\Fpop$ and $\Fsel$, using the same instance sets as the headline analysis:

\begin{center}
\fontsize{9}{10.8}\selectfont
\setlength{\tabcolsep}{4pt}
\renewcommand{\arraystretch}{1.18}
\arrayrulecolor{black!35}
\setlength{\heavyrulewidth}{0.65pt}
\setlength{\lightrulewidth}{0.35pt}
\begin{tabularx}{\linewidth}{@{}>{\raggedright\arraybackslash}Xccccc@{}}
\toprule
\rowcolor[HTML]{F1F6FE}
\textbf{\boldmath Model} & \textbf{\boldmath $n$ (noncomp)} & \textbf{\boldmath $n$ (missing)} & \textbf{\boldmath $\Delta$ (noncomp)} & \textbf{\boldmath $\Delta$ (missing)} & \textbf{\boldmath shift} \\

\midrule
\rowcolor[HTML]{EAF1FC}
GLM-5.3 & 1000 & 928 & \cellcolor[HTML]{FFF0DC}\color[HTML]{1B2735}+7.36pp & \cellcolor[HTML]{FFF0DC}\color[HTML]{1B2735}+7.14pp & \cellcolor[HTML]{FFF0DC}\color[HTML]{1B2735}$+0.22$ \\
flash & 300 & 291 & \cellcolor[HTML]{FFF0DC}\color[HTML]{1B2735}+5.95pp & \cellcolor[HTML]{FFF0DC}\color[HTML]{1B2735}+5.15pp & \cellcolor[HTML]{FFF0DC}\color[HTML]{1B2735}$+0.81$ \\
v4-pro & 200 & 198 & \cellcolor[HTML]{FFF0DC}\color[HTML]{1B2735}$+2.39$pp & \cellcolor[HTML]{FFF0DC}\color[HTML]{1B2735}$+2.27$pp & \cellcolor[HTML]{FFF0DC}\color[HTML]{1B2735}$+0.12$ \\
glm-5.2 & 100 & 96 & \cellcolor[HTML]{FFF0DC}\color[HTML]{1B2735}$+1.57$pp & \cellcolor[HTML]{FFF0DC}\color[HTML]{1B2735}$+0.92$pp & \cellcolor[HTML]{FFF0DC}\color[HTML]{1B2735}$+0.65$ \\
\bottomrule
\end{tabularx}
\end{center}

Scoring nulls as non-compliant increases the gap by $+0.12$ to $+0.81$pp relative to dropping them. Nulls fall disproportionately on incorrect instances (for GLM-5.3, 41 of 72 nulls are on incorrect rows against a 48.5\% base rate), so assigning $A{=}0$ depresses $\Fpop$ more than $\Fsel$. All four gap estimates remain positive under both conventions.

\section{Full Discussion}
\label{sec:discussion_full}

\emph{Pathway attribution.} The detector flagged no neutral-framing instances. A flag requires the answer to equal the cue and either the trace to name it (``reference'', ``previously'', ``provided result'') or the answer to diverge from the oracle. Unacknowledged use of a correct cue escapes detection; 72\% of cues were correct. The zero rate therefore cannot exclude collider bias ($R\to C\leftarrow S$). Operation-count strata capture 21.9\% (flash) and 42.8\% (GLM-5.3) of the gap, consistent with the difficulty paths $D\to C$ and $D\to A$, but most remains unattributed.

Because interventions leave the cue unchanged, cue correctness provides a descriptive check. In the $n=300$ arithmetic flash run, complete-case compliance is $53.8\%$ (112/208) with correct cues and $55.4\%$ (46/83) with incorrect cues: $-1.6$pp ($z=-0.24$, $p=0.81$). This excludes the 9 nulls retained in the headline estimate. The non-significant contrast does not resolve cue reliance, since cue use in both groups can also produce similar rates.

Adversarial elicitation flags 10\% of GLM-5.3 responses (5/50, Section~\ref{sec:shortcut_elicitation}); all five use correct cues and pass the filter. The targeted incorrect-cue probe uses separate deployments: under explicit-reliance framing, deepseek-v4-pro (12/12), gemini-2.5-flash (6/6), and kimi-k3 (2/6) follow the stated value on informative instances, producing wrong answers excluded by the filter. Neither glm-5.2 nor flash follows in any tested framing. These unpooled probes establish cue following and exclusion, not its contribution to the headline gap. Both pathways can coexist under Proposition~\ref{prop:collider_bias}.

\emph{Difficulty adjustment and residual.} Logistic IPW removes 23\% of the gap, leaving a significantly positive residual. This single-covariate adjustment depends on the propensity specification and does not identify the residual's mechanism.

\section{Full Limitations}
\label{sec:limitations_full}

\emph{Scope.} Two of the four sub-ceiling models reach individual significance ($p < 0.05$). The other two have positive point estimates with intervals crossing zero at $n = 100$--$200$. The two 100\%-accuracy controls are algebraic identities and are excluded from the pooled test; Table~\ref{tab:empirical} reports their $p$-values as N/A.

All headline real-model runs use arithmetic DAGs. Boolean circuits were evaluated only at ceiling and therefore check the boundary condition. We also attempted graph reachability with GLM-5.3 ($n = 20$), but excluded it from the evidence: accuracy was 95\%, and only 3 of 20 counterfactual responses parsed, with the rest empty. This run supports no inference about the mechanism. No family other than arithmetic has been measured at non-ceiling accuracy. Generalization to natural-language CoT, a central setting in the reviewed literature, remains untested.

\emph{Mechanism.} The collider pathway ($R \to C \leftarrow S$) requires a shortcut branch, which was not detected under neutral framing. A targeted probe reaches it on multiple deployments under an explicit-reliance framing (Section~\ref{sec:shortcut_elicitation}), using different endpoints without pooling. For the headline gaps, the measured operation-count association is consistent with difficulty confounding; it does not identify a causal contribution.

\emph{Residual.} IPW removes 23--34\% of the gap across propensity specifications. The logistic specification leaves $+5.7$pp unattributed; the data do not separate an $R$-confounder from other residual causes.

\emph{Measurement.} Compliance on incorrect traces ($C = 0$, 38--51\% across models) is semantically ambiguous. All-trace evaluation removes correctness selection but leaves compliance on $C = 0$ instances ambiguous (Section~\ref{sec:compliance_discussion}).

\emph{Decoding and serving deployment.} Headline runs use greedy decoding (temperature $= 0$) and English-only prompts. We have not evaluated stochastic decoding or multilingual prompts. The same model identifier behaved materially differently on two endpoints, so Table~\ref{tab:empirical} reports deployment-specific gaps.
% END sections/appendix

\end{document}